\theoremstyle{plain}
\newtheorem{theorem}{Theorem}
\newtheorem{corollary}[theorem]{Corollary}
\definecolor{lightblue}{RGB}{210, 220, 250}
\definecolor{lightgray}{RGB}{225, 225, 225}
\title{On the Integration of Spatial-Temporal Knowledge: A Lightweight Approach to Atmospheric Time Series Forecasting}
\author{%
  Yisong Fu$^{1,2}$,\quad Fei Wang$^{1,2*}$,\quad Zezhi Shao$^{1}$, \quad Boyu Diao$^{1,2}$, \quad Lin Wu$^{1,2}$,  \\
  \textbf{Zhulin An$^{1,2}$, \quad Chengqing Yu$^{1,2}$, \quad Yujie Li$^{1,2}$, \quad Yongjun Xu$^{1,2}$\thanks{Corresponding authors.}} \\
  $^1$State Key Laboratory of AI Safety, Institute of Computing Technology, \\
  Chinese Academy of Sciences\quad $^2$University of Chinese Academy of Sciences \\
  \texttt{\{fuyisong24s, wangfei, shaozezhi, diaoboyu, wulin\}@ict.ac.cn,}  \\
  \texttt{\{anzhulin, yuchengqing22b, liyujie23s, xyj\}@ict.ac.cn}\\
}
\begin{document}

\maketitle

\begin{abstract}
Transformers have gained attention in atmospheric time series forecasting (ATSF) for their ability to capture global spatial-temporal correlations. However, their complex architectures lead to excessive parameter counts and extended training times, limiting their scalability to large-scale forecasting. In this paper, we revisit ATSF from a theoretical perspective of atmospheric dynamics and uncover a key insight: spatial-temporal position embedding (STPE) can inherently model spatial-temporal correlations even without attention mechanisms. Its effectiveness arises from the integration of geographical coordinates and temporal features, which are intrinsically linked to atmospheric dynamics. Based on this, we propose \textbf{STELLA}, a \textbf{S}patial-\textbf{T}emporal knowledge \textbf{E}mbedded \textbf{L}ightweight mode\textbf{L} for \textbf{A}STF, utilizing only STPE and an MLP architecture in place of Transformer layers. With \textit{10k} parameters and one hour of training, STELLA achieves superior performance on five datasets compared to other advanced methods. The paper emphasizes the effectiveness of spatial-temporal knowledge integration over complex architectures, providing novel insights for ATSF. The code is available at \url{https://github.com/GestaltCogTeam/STELLA}.
\end{abstract}

\section{Introduction}

Atmospheric time series forecasting (ATSF), such as weather forecasting and air quality prediction, is of great significance in a wide variety of domains such as agriculture, energy, and economics. In recent decades, automatic weather stations have grown exponentially, becoming a cornerstone of modern meteorology \cite{sose2016weather}. These stations are cost-effective for applications \cite{BERNARDES2023941,tenzin2017low} and are ideally positioned to provide a large volume of data to advance deep learning (DL) approaches in ATSF \cite{schultz2021can}.

However, the application of DL in ATSF faces two main challenges: (1) the observations of worldwide stations exhibit intricate spatial-temporal correlations \citep{wu2023interpretable}, necessitating models with advanced mining capabilities to ensure accurate forecasting \cite{geospatial}. (2) The requirement for fine-grained and large-scale forecasting \cite{palmer2014climate,wang2023ai} calls for highly efficient and scalable models.

These two challenges are often \textit{trade-offs} in prior studies, as shown in Figure \ref{fig1}. Transformer and its variants, which have gained significant popularity in ATSF, utilize sophisticated architectures to capture global spatial-temporal correlations. For example, AirFormer \cite{liang2023airformer} introduces dartboard attention for air quality prediction, and MRIformer \cite{yu2024mriformer} utilizes multi-resolution attention to predict wind speed. However, these complex designs come with significant costs, including hundreds of millions of parameters and extended training times, which limit their scalability for large-scale forecasting and hinder their applicability, especially with limited computational resources \cite{deng2024Parsimony}. Moreover, despite the increased complexity, the performance gains are quite limited and do not justify the trade-off for practical utility. This motivates us to rethink the bottleneck of ATSF.

\begin{wrapfigure}{r}{0.5\textwidth}
    \centering
\includegraphics[width=\linewidth]{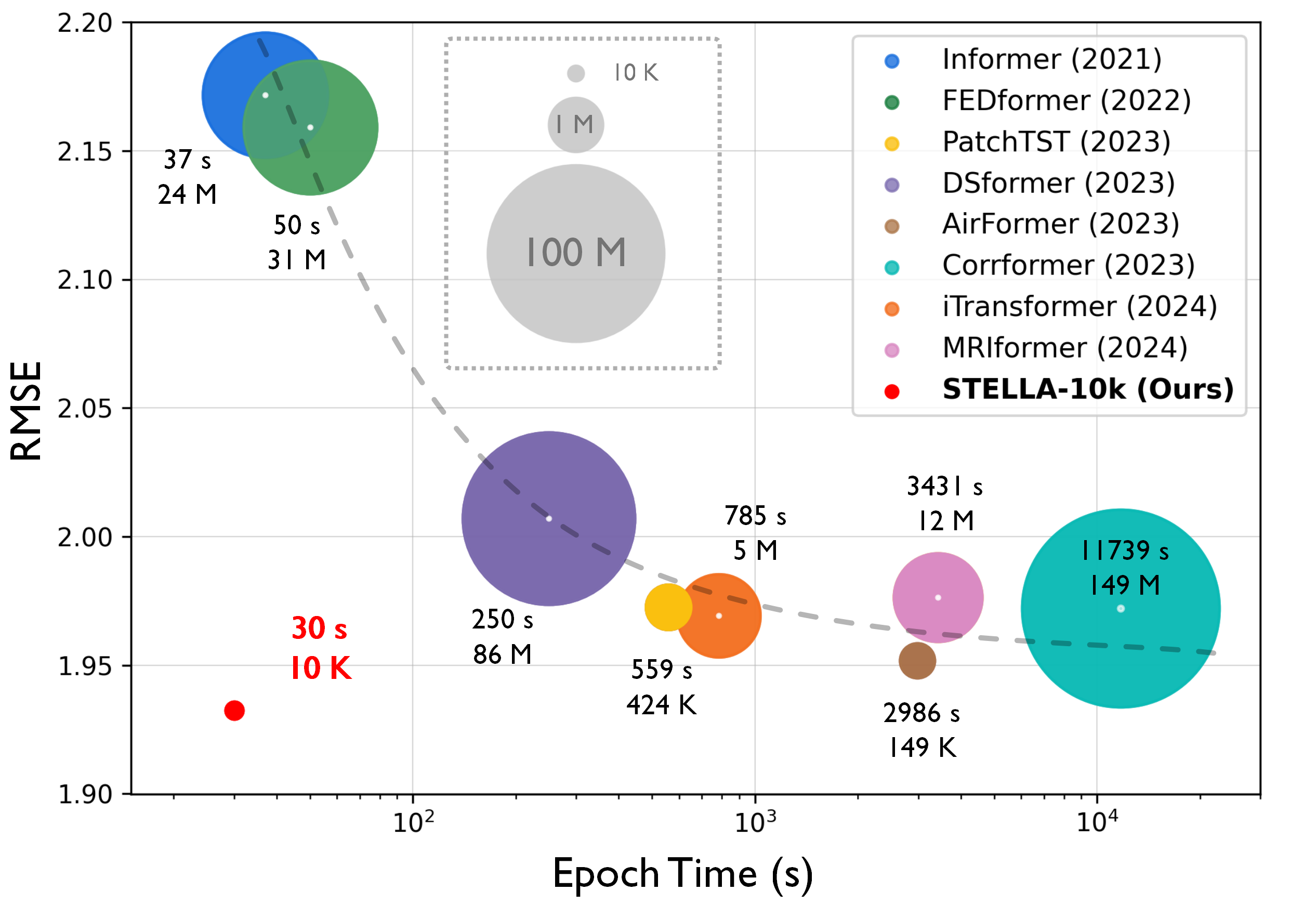}

\caption{Performance-efficiency comparison on the GlobalWind dataset. A performance-efficiency trade-off can be observed in Transformers, while STELLA leads in both aspects. The area of the plot represents the parameter count of the model.}

\label{fig1}
\end{wrapfigure}

To this end, we delve deeper into the physical principles of atmospheric dynamics. In the atmospheric system, the evolution of atmospheric variable $\nu$ can be described by a partial differential equation (PDE):
\begin{equation}
\frac{\partial\nu}{\partial t}=f\left(\nu,t,\lambda,\phi,z\right).
\label{eq:intro}
\end{equation}
We demonstrate it theoretically in \S \ref{theory}. The equation indicates that $\nu$ is directly influenced by time $t$ and geographical coordinates $\lambda,\phi,z$. However, most previous work has overlooked this spatial-temporal knowledge, simplistically treating $f$ as a function of historical values while focusing on fitting it in more complex forms. This misguided direction has led to the key bottleneck of ATSF.

In this paper, we highlight the significance of spatial-temporal knowledge and introduce a spatial-temporal position embedding (STPE) to integrate geographical coordinates and temporal features from Eq.(\ref{eq:intro}). Although position embeddings are widely considered as an adjunct to permutation-invariant attention mechanisms, we demonstrate that \textbf{STPE can inherently model spatial-temporal correlations even in the absence of attention mechanisms, offering a \textquotesingle \textit{free lunch}\textquotesingle \ in balancing the performance-efficiency trade-off.}

Consequently, we propose \textbf{STELLA}, a \textbf{S}patial-\textbf{T}emporal knowledge \textbf{E}mbedded \textbf{L}ightweight mode\textbf{L} for \textbf{A}STF. STELLA utilizes STPE and replaces the Transformer layers with a simple MLP. Figure \ref{fig1} shows STELLA's lead in both performance and efficiency. With only \textit{10k} parameters and one hour of training, STELLA achieves competitive performance against 17 baselines. Furthermore, it is noteworthy that the computational complexity of STELLA grows linearly with the increase of the number of stations $N$ and the parameter count is independent of $N$. Therefore, STELLA can efficiently scale to the data with a larger $N$, facilitating large-scale forecasting. STELLA's leading-edge performance and efficiency challenge the prevailing assumptions that ATSF necessitates complex architecture (e.g., Transformers, STGNNs), offering a balanced solution for ATSF. Our contributions can be summarized as follows.
\begin{itemize}
    \item We innovatively highlight the significance of STPE in Transformer-based ATSF models. Even without attention mechanisms, it can explicitly capture spatial-temporal correlations by integrating spatial-temporal knowledge into the model. We theoretically prove its effectiveness from the perspective of atmospheric dynamics. Furthermore, STPE can also be applied to other models to improve performance (\S \ref{sec:generalization}).
    \item We propose STELLA, utilizing the STPE and replacing the Transformer layers with a simple MLP. To the best of our knowledge, it is the first lightweight model designed for ATSF, challenging the prevailing assumption that ATSF necessitates complex architecture.
    \item STELLA offers a performance-efficiency balanced solution for ATSF. Extensive experiments across five datasets demonstrate the superior performance of STELLA over 17 baselines.

\end{itemize}

\section{Related Work}

\subsection{DL in Atmospheric Time Series Forecasting}
Atmospheric time series forecasting (ATSF), including applications such as weather forecasting and air quality prediction, involves predicting key atmospheric variables (e.g., temperature) collected from weather stations over time. Spatial-temporal graph neural networks (STGNNs) prove effective in modeling spatial-temporal patterns across global stations \cite{lin2022conditional,ni2022ge,shao2022pre,AGCRN,MTGNN,GTS,yu2025ginar+}, but most models are limited to short-term forecasting due to their high computational complexity, which restricts their scalability. Recently, Transformer-based models have gained popularity for their ability to capture global spatial-temporal correlations. For example, AirFormer \cite{liang2023airformer} introduces dartboard attention to model spatial correlations in air quality prediction, while Corrformer \cite{wu2023interpretable} uses a multi-correlation mechanism as a substitute for attention in weather forecasting. MGSFformer \cite{yu2024mgsfformer} and MRIformer \cite{yu2024mriformer} employ attention to capture multi-resolution correlations through downsampling, aiming to forecast wind speed and air quality. However, the complexity of these models introduces significant computational costs, limiting their practical applicability and scalability for large-scale forecasting.

\subsection{Studies of the effectiveness of Transformers}
The effectiveness of Transformers has been thoroughly discussed in the fields of computer vision (CV) \cite{yu2022metaformer,lin2024mlp} and natural language processing (NLP) \cite{bian2021attention}. In time series forecasting (TSF), LSTF-Linear \cite{zeng2023transformers} pioneered the exploration and outperformed a variety of Transformer-based methods with a linear model. Shao et al. \cite{shao2024exploring} posited that Transformer-based models face an over-fitting problem on specific datasets. Some recent works further questioned the necessity of attention in Transformers for TSF and replaced the attention with other modules. For example, MTS-Mixers \cite{li2023mts} attempt to use random matrices and factorized MLPs instead of attention for information aggregation. MEAformer \cite{huang2024meaformer} replaces conventional attention with a linear-complexity mixing module. Additionally, SOFTS \cite{SOFTS} employs STAR module as a substitute for attention, which aggregates all series into a global core representation. These studies consider PE as supplementary to attention mechanisms and consequently remove it along with attention, yet none have recognized the importance of PE.

\section{Methodology}
\begin{figure*}[t]
    \centering
    \includegraphics[width=0.95\textwidth]{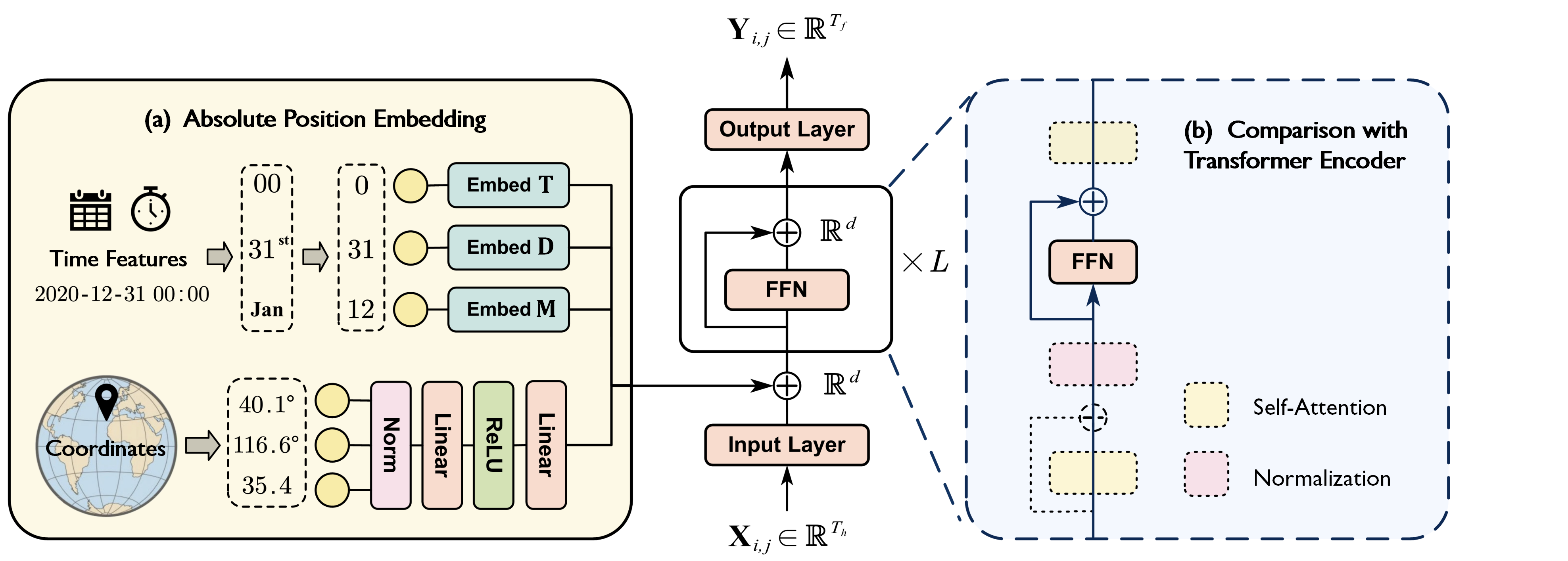}
    \caption{Architecture of STELLA.}
    \label{fig2}
\end{figure*}

\subsection{Problem Definition}
\paragraph{Atmospheric Time Series Forecasting.} We consider $N$ stations and each station collects $C$ time series of atmospheric variables (e.g., temperature). Then the observed data at time $t$ can be denoted as $\mathbf{X}_t\in \mathbb R^{N\times C}$. The 3D geographical coordinates of stations are organized as a matrix $\mathbf\Sigma\in \mathbb R^{3\times N}$, which is naturally accessible in station-based forecasting. Given the historical time series of all stations from the past $T_h$ time steps and optional spatial and temporal information, we aim to learn a function $\mathcal F_\theta(\cdot)$ to forecast the values of future $T_f$ time steps :
\begin{equation}
    \mathbf{Y}_{t:t+T_f}=\mathcal{F}_\theta(\mathbf{X}_{t-T_h:t};\mathbf{\Sigma},t),
\end{equation}
where $\mathbf{X}_{t-T_h:t}\in \mathbb{R}^{T_h\times N\times C}$ is the historical data, and $ \mathbf{Y}_{t:t+T_f}\in \mathbb R^{T_f\times N\times C}$ is the future data.

\subsection{Overview of STELLA}
As shown in Figure \ref{fig2}, STELLA consists of an STPE module (Figure \ref{fig2}a) and an encoder that retains only the Feed-Forward Network (FFN) and discards all other components (Figure \ref{fig2}b). This design significantly enhances computational and memory efficiency while maintaining competitive performance relative to more complex architectures.
Our choice to pair STPE with a simple MLP architecture was driven by a key motivation: to isolate and demonstrate the standalone power of STPE in spatial-temporal modeling. This minimalistic yet effective design not only reduces computational overhead but also offers clear interpretability regarding the contribution of spatial-temporal knowledge in ATSF.

\subsection{Spatial-Temporal Position Embedding}
Position embedding encodes the positional information of tokens in a sequence \cite{vaswani2017attention} and is widely regarded as an auxiliary component for permutation-invariant attention mechanisms. However, we introduce spatial-temporal position embedding (STPE) to integrate geographical coordinates and temporal features into the model, demonstrating that it can inherently capture spatial-temporal correlations by embedding additional spatial-temporal knowledge. Specifically, STPE consists of two components: spatial embedding and temporal embedding.

\paragraph{Spatial Embedding.}
The spatial embedding provides the geographical coordinates of stations to the model, which can explicitly model spatial correlations among worldwide stations. Specifically, we encode the geographical coordinates of the station into latent space. First, to account for the differing ranges of coordinate values, we perform normalization on each coordinate independently. Then, we utilize a feed-forward network (FFN). Therefore, the spatial embedding $\textbf{SE}^i\in \mathbb R^{d}$ can be denoted as:
\begin{equation}
    \textbf{SE}^i=\mathrm{FFN}(\boldsymbol\Sigma^i)=\textbf{W}_2\mathrm{ReLU}\left(\textbf{W}_1\mathbf\Sigma^i+b_1\right)+b_2,
\end{equation}
where $\mathbf\Sigma^i\in \mathbb R^{3}$ represents the normalized coordinates of station $i$.

\paragraph{Temporal Embedding.} 
Temporal embedding provides real-world temporal knowledge to the model. We utilize three learnable embedding matrices $\mathbf{T}\in \mathbb R^{24 \times d}$, $\mathbf{D}\in \mathbb R^{31 \times d}$ and $\mathbf{M}\in \mathbb R^{12 \times d}$ to save the temporal embeddings of all time steps \cite{shao2022spatial}. They represent the patterns of weather in three scales ( $\mathbf T$ denotes hours in a day, $\mathbf D$ denotes days in a month and $\mathbf M$ denotes the months in a year), contributing to modeling the multi-scale temporal correlations of atmospheric states. We add them together to obtain temporal embedding $\textbf {TE}_t$:
\begin{equation}
    \mathbf{TE}_t=\mathbf{T}_t+\mathbf{D}_t+\mathbf{M}_t.
\end{equation}

\subsection{MLP Backbone}
\paragraph{Input Layer.}
Let $\mathbf X^{i,j}\in \mathbb R^{T_h}$ be the historical time series of station $i$ and variable $j$. $\mathbf X^{i,j}$ is mapped by the input embedding layer to $\mathbf H^{i,j}\in \mathbb R^d$ in latent space, then added to the spatial and temporal embedding to obtain $\mathbf E_t^{i,j}$:
\begin{equation}
\begin{split}
    \mathbf H^{i,j}&=\text{Linear}(\mathbf X^{i,j}),\\
        \mathbf E^{i,j}_t&=\mathbf H^{i,j}+\textbf{SE}^i+\textbf{TE}_t.
\end{split}
\end{equation}
\paragraph{Encoder.}
We utilize an $L$-layer MLP as encoder to learn the representation $\mathbf{Z}^{i,j}$ from the embedded data $\mathbf E^{i,j}_t$. Let $(\mathbf Z^{i,j})^0=\mathbf E^{i,j}_t$, and the \textit{l}-th MLP layer with residual connection can be denoted as:
\begin{equation}
    (\mathbf{Z}^{i,j})^{l+1}=\mathrm{FFN}^l\left((\mathbf Z^{i,j})^l\right)+(\mathbf Z^{i,j})^l.
\end{equation}

\paragraph{Output Layer.}
We employ a linear layer to map the representation $\mathbf{Z}\in \mathbb R^{d\times N\times C}$ to the specified dimension, generating the prediction $\hat{\mathbf Y}\in \mathbb R^{T_f\times N\times C}$.

\begin{figure*}
    \centering
    \includegraphics[width=0.9\linewidth]{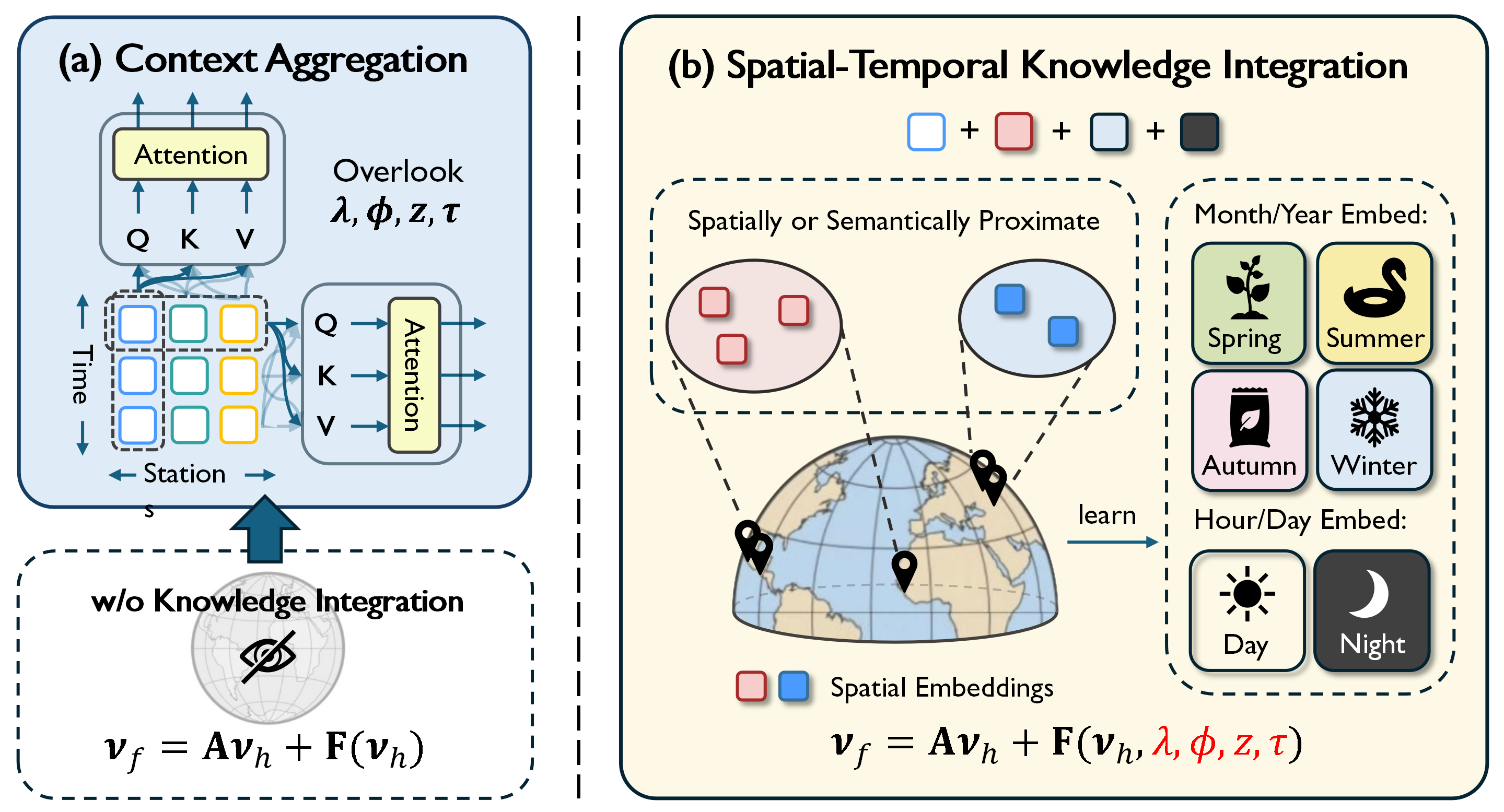}
    \caption{The key distinction that enhances the effectiveness of STELLA compared to prior approaches. (a) Most prior methods primarily rely on the context aggregation of historical observations (vanilla attention as an example). (b) In contrast, STELLA makes predictions guided by spatial and temporal knowledge. For clarity, history data is denoted as $\boldsymbol{\nu}_h$, future data as $\boldsymbol{\nu}_f$, and the day-in-month embedding is omitted.}
    \label{fig:CAnSTPE}
\end{figure*}

\subsection{Theoretical Analysis} \label{theory}
In this part, we provide a theoretical analysis of STELLA, focusing on its effectiveness and efficiency.

\paragraph{Effectiveness of STELLA.}
The effectiveness of STELLA lies in the fact that STPE integrates geographical coordinates and real-world temporal knowledge into the model, which are intrinsically linked to atmospheric dynamics. In the following, we theoretically demonstrate this relationship.

\begin{theorem}
Let $\{\lambda,\phi,z\}$ be the longitude, latitude, and altitude of a weather station and $\nu$ be a meteorological variable collected by the station, then the time evolution of $\nu$ is a function of $\nu$, time $t$ and coordinate $\lambda,\phi,z$:
\begin{equation}
        \frac{\partial\nu}{\partial t} = f(\nu, \lambda, \phi, z, t).
\end{equation}
\label{theorem 1}
\end{theorem}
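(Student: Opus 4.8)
The plan is to ground the statement in the governing equations of atmospheric dynamics, which describe the atmosphere as a rotating, stratified fluid on the Earth. First I would collect the full instantaneous state of the atmosphere at a point into a vector comprising the velocity field $\mathbf{v}$, pressure $p$, density $\rho$, temperature, and humidity, and recall that this state obeys a closed PDE system: the momentum (Navier--Stokes / primitive) equations, the continuity equation, the thermodynamic energy equation, and the equation of state. Any meteorological variable $\nu$ that a station records is a component of, or a smooth function of, this state vector, so it suffices to show the claim for the state evolution itself.

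Next I would rewrite the evolution law for $\nu$ by separating the local time derivative from advection through the material derivative,
\begin{equation}
    \frac{\partial \nu}{\partial t} = \frac{D\nu}{Dt} - \mathbf{v}\cdot\nabla\nu,
\end{equation}
and then inspect each term on the right. The material derivative $D\nu/Dt$ is fixed by the forcing and source/sink terms of the governing equations, and these terms carry \emph{explicit} dependence on the station's coordinates: the Coriolis contribution enters through latitude via $f = 2\Omega\sin\varphi$, the geopotential and hydrostatic balance enter through elevation $z$, and the radiative/solar forcing enters through the solar zenith angle, which is a known function of longitude $\lambda$, latitude $\varphi$, and time $t$. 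The advection term $\mathbf{v}\cdot\nabla\nu$ is likewise determined by the state and its spatial gradients.

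The key step is then a closure argument based on well-posedness of the dynamics: because the governing system is deterministic, its solution is uniquely determined once initial and boundary conditions are fixed, so every state variable is a well-defined field $\nu = \nu(\lambda,\varphi,z,t)$. Substituting these fields back into the right-hand side collapses all the state-dependent and gradient-dependent contributions, together with the coordinate-dependent forcing terms, into a single function of $(\lambda,\varphi,z,t)$ alone, which yields $\partial\nu/\partial t = F(\lambda,\varphi,z,t)$ and completes the argument.

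The main obstacle I anticipate is not algebraic but conceptual: strictly speaking $\partial\nu/\partial t$ depends on the instantaneous state and its spatial derivatives, and reducing this to a pointwise function of coordinates and time presupposes that the atmospheric dynamics admit a unique solution field over the relevant domain. I would therefore lean on the existence of a well-posed atmospheric state rather than attempt a constructive derivation of $F$, and emphasize that the physical content of the theorem is precisely the explicit appearance of $\lambda$, $\varphi$, and $z$ in the forcing terms. This is exactly what justifies encoding geographical coordinates and real-world time features in the model, as done through the spatial and temporal absolute positional encoding of LightWeather.
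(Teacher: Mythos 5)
Your proof is correct and follows the same underlying route as the paper's, but it is executed at a different level of generality and makes explicit a step the paper leaves tacit. The paper proves the theorem concretely for a single example: it takes the zonal momentum equation in spherical coordinates, approximates the geocentric distance $r=a+z$ by $a$ (so $\partial/\partial r=\partial/\partial z$), and simply rearranges so that the advection terms $u\frac{\partial u}{a\cos\varphi\,\partial\lambda}+v\frac{\partial u}{a\,\partial\varphi}+w\frac{\partial u}{\partial z}$, the pressure-gradient, Coriolis, curvature, and friction terms all sit on the right-hand side, then declares the result to be $F(\lambda,\varphi,z,t)$, remarking that "analogous methods" handle other variables. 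You instead argue once for the whole closed primitive-equation system and, crucially, you name the closure step on which both arguments actually rest: the right-hand side a priori depends on the state fields $u,v,w,p,\rho$ and their spatial gradients, and it collapses to a pointwise function of $(\lambda,\varphi,z,t)$ only because a well-posed, deterministic solution makes every such field (and hence every derived quantity) a function of position and time. The paper glosses this with the phrase "it is possible to render the left side of the equation spatial-independent by rearranging terms," so your version is the more honest account of where the mathematical content lies, and it genuinely covers all meteorological variables rather than one. What the paper's concrete computation buys, and what you would do well to retain, is the explicit exhibition of the coordinate dependence in a specific equation — the $\tan\varphi$ metric term, $f=2\Omega\sin\varphi$, and the small-elevation approximation $r\approx a$ justifying $z$ as the vertical coordinate — which is the physical substance motivating the spatial encoding; your mention of Coriolis, hydrostatic, and solar-forcing dependence plays the same role in sketch form.
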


\begin{proof}
 We provide proof with zonal wind speed as an example. See Appendix \ref{proof atmos} for the full proof. According to the fundamental equations of atmospheric dynamics, the wind speed $\mathbf V$ satisfies:
 \begin{equation}
     \frac{\mathrm{d} \mathbf{V}}{\mathrm{d} t} = -\frac{1}{\rho}\nabla p -2\mathbf{\Omega}\times \mathbf{V} + \mathbf{g} +\mathbf{F},
 \end{equation}
 where $p$ is the pressure, $\rho$ is the air density, and other terms are constants. We can transform the equation into spherical coordinates and apply the thin-layer approximation. The zonal wind speed $u$ can be expressed as:
\begin{equation}
        \frac{\mathrm d u}{\mathrm d t}=-\frac{1}{\rho}\frac{\partial p}{a\cos\phi\partial\lambda}+fv+\frac{uv\tan \phi}{a}+F_{\lambda},
    \end{equation}
where $\frac{\mathrm d}{\mathrm dt}=\frac{\partial}{\partial t}+u\frac{\partial}{a \cos \phi\partial \lambda}+v\frac{\partial}{a \partial\phi}+w\frac{\partial}{\partial z}$, $a$ is the Earth's radius. It is possible to render the left side of the equation spatially independent by rearranging terms:
\begin{equation}
\begin{split}
    \frac{\partial u}{\partial t}=&-(u\frac{\partial u}{a \cos \phi\partial \lambda}+v\frac{\partial u}{a \partial\phi}+w\frac{\partial u}{\partial z})-\frac{1}{\rho}\frac{\partial p}{a\cos\phi\partial\lambda}+fv+\frac{uv\tan \phi}{a}+F_{\lambda}.
\end{split}
\end{equation}

Therefore, we have

\begin{equation}
    \frac{\partial u}{\partial t}=f\left(u,\lambda,\phi,z,t\right).
\end{equation}

\end{proof}

Considering using historical data spanning $T_h$ steps to predict future $T_f$ steps, we can derive the following corollary:
\begin{corollary}
\begin{equation}
        \boldsymbol\nu_{\tau+1:\tau+T_f}=\mathbf A \boldsymbol{\nu}_{\tau-T_h+1:\tau}+\mathbf F(\boldsymbol{\nu}_{\tau-T_h+1:\tau},\lambda, \phi, z, \tau),
    \label{eq corollary}
\end{equation}
    where $\boldsymbol{\nu}_{\tau-T_h+1:\tau}$ is the historical data, $\boldsymbol{\nu}_{\tau+1:\tau+T_f}$ is the future data, and $||\mathbf A||_{\infty}=1$.
    \label{corollary}
\end{corollary}

The detailed proof is provided in Appendix \ref{corollary proof}. According to Eq.\eqref{eq corollary}, we can employ a neural network $\mathcal F_\theta$ to approximate $\mathbf F(\boldsymbol{\nu},\lambda,\phi,z,\tau)$. However, most previous models, including Transformers and STGNNs, have overlooked the critical spatial-temporal factors $\lambda, \phi, z, \tau$, instead treating $\mathbf F$ simplistically as a function of historical values, as shown in Figure \ref{fig:CAnSTPE} (a). These studies introduce increasingly complex context aggregation methods in an attempt to better fit historical data. However, blindly guessing spatial-temporal correlations by solely aggregating historical context may lead to overfitting, which becomes the key bottleneck for these models.

In comparison, STELLA can explicitly model $\mathbf F$ with $\lambda, \phi, z, \tau$ introduced by STPE, as shown in Figure \ref{fig:CAnSTPE} (b). STELLA aims to learn similar STPEs for spatially or semantically proximate stations and time steps, thereby capturing spatial-temporal correlations. Specifically, the spatial embedding learns the climatic features of different locations, while the temporal embedding captures the periodicity and seasonality of atmospheric states. We demonstrate this through visualization in \S \ref{sec:visualize PE}. Therefore, STPE serves as an effective method for modeling spatial-temporal correlations.

\paragraph{Efficiency of STELLA.}
We theoretically analyze the efficiency of STELLA from the perspectives of parameter volume and computational complexity. 

\begin{theorem}
The total number of parameters required for STELLA is
\begin{equation}
    \underbrace{2(d+1)dL+(T_h+T_f+2)d}_{MLP}+\underbrace{(d+72)d}_{STPE}.
\end{equation}
\label{theorem 2}
\end{theorem}

The proof is provided in Appendix \ref{proof param}. According to \textbf{Theorem \ref{theorem 2}}, the parameter count of STELLA is independent of the number of stations $N$. Therefore, a key advantage of STELLA is that its deployment overhead remains static regardless of how many stations it serves, making it particularly suitable for resource-constrained environments. In addition, the computational complexity of STELLA grows linearly with $N$. In contrast, context aggregation causes quadratic complexity and $\mathcal O(N)$ parameters, which are unaffordable with large-scale stations. Therefore, STELLA efficiently models spatial correlations with $N$-independent parameters and linear complexity, ensuring optimal scalability for large-scale data.

\section{Experiments}

\subsection{Experimental Setup}
\paragraph{Datasets.} We conduct experiments on five real-world datasets: (1) \textbf{GlobalWind and GlobalTemp} \cite{wu2023interpretable} comprise the hourly averaged wind speed and temperature of 3,850 stations on a global scale, spanning two years. Following the prior work \cite{wu2023interpretable}, we use the past 48 hours to predict the next 24 hours for short-term weather forecasting. (2) \textbf{ChinaWind and ChinaTemp} comprise the daily averaged wind speed and temperature of 396 stations in China, spanning 10 years. We use the past 60 days to predict the next 30 days, addressing long-term weather forecasting. (3) \textbf{China-PM2.5} comprises the hourly averaged wind speed of 1,316 stations in China, spanning five years. We use the past 72 hours to predict the next 72 hours for mid-term air quality prediction. See Appendix \ref{dataset} for more details of the datasets.

\paragraph{Baselines.} We compare our STELLA with the following five categories of baselines. (1) \textit{Classic methods}: HI \cite{cui2021historical}, ARIMA \cite{shumway2000time}. (2) \textit{STGNNs}: AGCRN \cite{AGCRN}, MTGNN \cite{MTGNN}, GTS \cite{GTS}. (3) \textit{Transformer-based TSF methods}: Informer \cite{zhou2021informer}, FEDformer \cite{zhou2022fedformer}, DSformer \cite{yu2023dsformer}, PatchTST \cite{nietime}, iTransformer \cite{liuitransformer}, DUET \cite{qiu2025duet}. (3) lightweight TSF methods: N-BEATS \cite{oreshkinn}, DLinear \cite{zeng2023transformers}, FITS \cite{xufits}. (4) \textit{ATSF specialized Transformers}: AirFormer \cite{liang2023airformer}, Corrformer \cite{wu2023interpretable}, MRIformer \cite{yu2024mriformer}. See Appendix \ref{baseline} for a detailed introduction to the baselines.

\begin{table}[t]
    \caption{Weather forecasting results on 5 datasets. The best results are in \textbf{bold} and the second best results are \underline{underlined}. Dashes denote the out-of-memory (OOM) errors.}
    \centering
    \resizebox{\linewidth}{!}{
    \begin{tabular}{c|cc|cc|cc|cc|cc} 
    \toprule
       \multirow{2}* {\textbf{Methods}} & \multicolumn{2}{|c|}{\textbf{GlobalWind}} & \multicolumn{2}{c}{\textbf{GlobalTemp}} & \multicolumn{2}{|c|}{\textbf{ChinaWind}}& \multicolumn{2}{|c|}{\textbf{ChinaTemp}} & \multicolumn{2}{c}{\textbf{ChinaPM2.5}}\\
       \cmidrule(lr){2-3}\cmidrule(lr){4-5}\cmidrule(lr){6-7}\cmidrule(lr){8-9}\cmidrule(lr){10-11}
         & \textbf{RMSE}& \textbf{MAE}& \textbf{RMSE}&\textbf{MAE} & \textbf{RMSE}& \textbf{MAE}& \textbf{RMSE}&\textbf{MAE} & \textbf{RMSE}&\textbf{MAE}\\
         \midrule
 \multicolumn{11}{c}{\texttt{Classic Methods}}\\ \midrule
         HI& 2.697 & 1.831 & 3.859& 2.575& 9.851 & 6.751 & 7.630& 5.834& 37.11&20.29\\ 
         ARIMA&  2.116&1.539& 4.575& 3.267&7.947&5.795& 5.396& 4.026& 38.66&20.40\\
         \midrule
    \multicolumn{11}{c}{\texttt{STGNNs}}\\
    \midrule
 AGCRN& -& -& -& -
& 7.458& 5.061& 4.336& \underline{2.999}& -&-
\\
 MTGNN& -& -& -& -
& 7.294& 5.055& 4.221& 3.168& 28.21&16.12\\
 GTS& -& -& -& -& 7.312& 4.997& 4.298& 3.082& -&-\\\midrule
 \multicolumn{11}{c}{\texttt{Transformer-Based TSF Methods}}\\\midrule
 Informer& 2.172& 1.496& 5.770& 4.415& 7.832& 5.279& 4.477& 3.045& 28.85&16.04\\
FEDformer& 2.159& 1.471& 3.324& 2.405& 7.334& 5.051& 4.665& 3.455& 28.75&16.14\\
 DSformer& 2.007& 1.347& 3.089& 2.057& 7.311& 5.000& 4.919& 3.605& 27.68&14.94\\
 PatchTST& 1.973& 1.332& 3.130& 2.062& 7.295& 5.033& 4.909& 3.600& 26.87&14.37\\
 iTransformer&  1.969&   1.314& 2.950 & \underline{1.883} &   7.248&   4.995&   4.292&   3.193&   26.82& 14.40\\
 DUET&  1.946&   1.318& 3.072 & 2.042 &   7.278&   5.029&   4.480&   3.048&   26.61& 14.41\\
 \midrule
 \multicolumn{11}{c}{\texttt{MLP-Based TSF Methods}}\\
 \midrule
  N-BEATS& 2.031& 1.390& 3.034& 2.117& 7.297& 4.998& 4.791& 3.486& 26.48&14.44\\
 DLinear& 2.005& 1.350& 3.149& 2.072& 7.309& 5.031& 4.990& 3.659& 27.60&14.76\\
 FITS& 2.021& 1.354& 3.150& 2.072& 7.284& 5.039& 5.283& 3.823& 27.73&14.96\\
 \midrule
 \multicolumn{11}{c}{\texttt{ATSF Specialized Methods}}\\\midrule
  AirFormer&1.952&1.314&5.594&4.127&7.909&5.377&4.772&3.561& 29.63&15.55\\
 Corrformer&  1.972&1.304& \underline{2.777}& 1.888& 7.224& 4.950& 4.728& 3.377& 28.07&15.28\\  
 MRIformer&  1.976&1.318& 3.085& 1.999& 7.264& 4.993& 5.132& 3.678& 26.99&14.39\\  
 \midrule\midrule
 \multirow{2}*{\textbf{STELLA-\textit{10k}}}&\underline{1.933}&\underline{1.294}&3.041&2.009& \underline{7.118}& \underline{4.876}& \underline{4.167}& 3.003& \underline{26.32}&\underline{14.08}\\ 
 & $\pm$ 0.001& $\pm$ 0.002& $\pm$0.002&$\pm$0.002& $\pm$0.003& $\pm$ 0.001& $\pm$0.002& $\pm$0.003& $\pm$0.003&$\pm$0.002\\
  \multirow{2}*{\textbf{STELLA\textsubscript{$opt$}}}&\textbf{1.919}&\textbf{1.284}&\textbf{2.724}&\textbf{1.858}& \textbf{7.104}& \textbf{4.869}& \textbf{4.112}& \textbf{2.975}& \textbf{26.15}&\textbf{13.85}\\ 
  & $\pm$ 0.002& $\pm$ 0.002& $\pm$0.003&$\pm$0.002& $\pm$0.010& $\pm$ 0.001& $\pm$0.003& $\pm$0.005& $\pm$0.004&$\pm$0.001\\
 \bottomrule
 \end{tabular}
 }
    \label{tab1}
\end{table}

\paragraph{Implementation Details.} We develop STELLA-\textit{10k} which has approximately 10k parameters. The number of MLP layers is 2, and the hidden dimension is 32. To explore the full performance of the model, we also conducted hyperparameter research on different datasets to find the optimal configuration, denoted as STELLA\textsubscript{$opt$}. Detailed configurations are provided in Appendix \ref{sec:optimal set}. We adopt the Adam optimizer \cite{kingma2014adam} to train our model and the learning rate is set to 5e-4. We trained all baselines with MAE (Mean Absolute Error) loss \cite{shao2024exploring,liang2022basicts}, and the results of all baselines are obtained using the best hyperparameters through hyperparameter search. We evaluate the performance of all baselines using two commonly used metrics: MAE and RMSE (Root Mean Square Error). All models are implemented with PyTorch 2.3.1 and trained on an NVIDIA GeForce RTX 4090 24GB GPU and an Intel Xeon Gold 6330 CPU.

\subsection{Main Results}

Table \ref{tab1} presents the results of the performance comparison between STELLA and other baselines on all datasets. The results of STELLA are averaged over 5 runs, with the standard deviation included. It is evident that the performance of lightweight TSF methods, such as DLinear and N-BEATS, is not satisfactory. This indicates that lightweight TSF models, which fail to capture spatial-temporal correlations, are inadequate for large-scale prediction tasks. STGNNs suffer from OOM errors due to their high computational complexity, while the performance is also suboptimal. Similarly, despite the complex architectures of Transformer-based models, most of them exhibit limited performance. 

In contrast, STELLA-\textit{10k} achieves competitive performance with a simple MLP architecture and only 10k parameters, while STELLA\textsubscript{$opt$} consistently outperforms all other baselines on five datasets and three ATSF tasks. This suggests that integrating spatial-temporal information can significantly enhance model performance, proving to be more effective than the complex architectures of Transformer-based models. In addition, we provide a comparative analysis between STELLA and numerical weather prediction (NWP) methods in Appendix \ref{nwp}.

\subsection{Efficiency Analysis}

\begin{wraptable}{r}{0.5\linewidth}
    \vspace{-10pt}
    \centering
    \caption{Efficiency metrics of STELLA and other Transformer-based methods on GlobalWind.}
    \vspace{\abovecaptionskip}
    \resizebox{\linewidth}{!}{
    \begin{tabular}{c|ccc}
    \toprule
         \textsc{Methods}&\textsc{Params}&\textsc{Epoch Time}&\textsc{Max Mem.}\\
         \midrule
         Informer& 23.94M&37s&1.39GB\\
 FEDformer& 31.07M& 50s&1.63GB\\
 DSformer& 85.99M& 250s&13.6GB\\
 PatchTST& 424.1K& 559s&19.22GB\\
         iTransformer&  4.55M&785s& 16.61GB\\\midrule
         AirFormer& 148.7K &2986s& 14.01GB\\
         Corrformer& 148.7M &11739s& 18.41GB\\
        MRIformer& 11.66M&3431s& 12.69GB\\\midrule
       \multirow{2}*{\textbf{STELLA-\textit{10k}}}& \multirow{2}*{\textbf{9.98K}}&\textbf{30s}&\multirow{2}*{\textbf{792MB}}\\
       &&(141s CPU)&\\
       \bottomrule
 \end{tabular}
 }
    \label{tab:2}
    \vspace{-6pt}
\end{wraptable}

Figure \ref{fig1} illustrates the performance-efficiency comparison with Transformers. Here we further compare STELLA and other baselines, evaluating parameter counts, epoch time, and GPU memory usage \cite{linsparsetsf}. Experiments are conducted on the most challenging GlobalWind dataset. As shown in Table \ref{tab:2}, STELLA surpasses other DL methods in terms of all three efficiency metrics. When compared to the ATSF specialized methods, STELLA demonstrates an order-of-magnitude improvement across all three efficiency metrics, being about 10$\times$ to 10,000$\times$ smaller, 100$\times$ to 300$\times$ faster, and 50$\times$ memory-efficient, respectively. Additionally, due to its compact parameter size and simple computations, STELLA can be efficiently trained in a CPU environment. It requires only 141 seconds to train STELLA-\textit{10k} for an epoch, making it well-suited for environments with limited computational resources. See Appendix \ref{efficiency limited} for detailed efficiency experiments under limited resources.

\subsection{Ablation Study}

\paragraph{Effects of STPE.}
STPE is the key component of STELLA. To study its effects, we first conduct experiments on models with the spatial embedding and temporal embedding removed separately. Table \ref{tab3} reveals that removing either embedding component leads to a decrease in MSE. This indicates that both spatial and temporal embeddings contribute positively to model performance. Next, we compare relative position embedding (RPE) with STPE. Specifically, RPE embeds the indices of stations instead of the absolute geographical coordinates. Since we project the temporal dimension into the hidden space, temporal RPE is unnecessary. The results are presented in Table \ref{tab3}. Although RPE introduces $Nd$ parameters, significantly increasing the model size, its performance still falls short of that of STPE, further validating its effectiveness of STPE.

\begin{table}[thbp]
    \centering
    \caption{Ablation results of STELLA on five datasets.}
    \resizebox{\columnwidth}{!}{
    \begin{tabular}{c|cc|cc|cc|cc|cc}
    \toprule
 \multirow{2}*{Methods}& \multicolumn{2}{c|}{GlobalWind}& \multicolumn{2}{c|}{GlobalTemp} & \multicolumn{2}{c|}{ChinaWind} & \multicolumn{2}{c|}{ChinaTemp}& \multicolumn{2}{c}{ChinaPM2.5}\\
    \cmidrule(lr){2-3}\cmidrule(lr){4-5}\cmidrule(lr){6-7}\cmidrule(lr){8-9}\cmidrule(lr){10-11}
         &  RMSE& MAE& RMSE&MAE & RMSE& MAE & RMSE& MAE & RMSE&MAE \\
        \midrule
         w/o Spatial Embedding&  1.962& 1.329& 2.821&1.921 & 7.199& 4.953& 4.798& 3.506& 26.32&13.95\\
         w/o Temporal Embedding& 1.963& 1.327& 2.812&1.918 & 7.209& 4.952& 4.357& 3.163& 26.63&14.08\\
        \midrule
        RPE& 1.960& 1.341& 2.900&1.984 & 7.197& 4.925& 4.555& 3.340& 26.50&13.96\\
        \midrule
        \rowcolor{lightgray}STELLA& \textbf{1.919}& \textbf{1.284}& \textbf{2.724}&\textbf{1.858} & \textbf{7.104}& \textbf{4.869}& \textbf{4.112}& \textbf{2.975}& \textbf{26.15}&\textbf{13.85}\\
        \bottomrule
    \end{tabular}
    }
    \label{tab3}
\end{table}

\begin{figure}[h]
    \centering
    \includegraphics[width=0.9\linewidth]{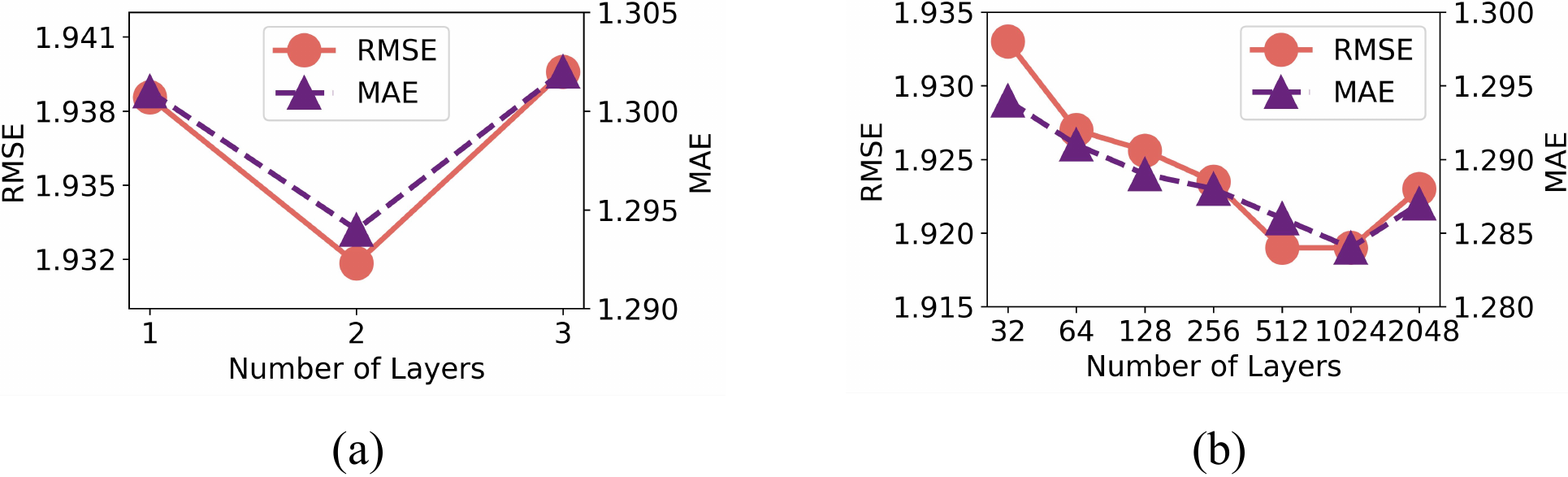}
    \caption{Results of hyperparameters analysis on GlobalWind dataset. (a) Effects of the number of layers ($d=64$). (b) Effects of the hidden dimension ($L=2$).}
    \label{fig:4}
\end{figure}

\paragraph{Hyperparameter Study.}
We investigate the effects of two important hyperparameters: the number of layers $L$ in the MLP and the hidden dimension $d$. As illustrated in Figure \ref{fig:4} (a), STELLA achieves the best performance when $L=2$, whereas an increase in $L$ beyond $2$ results in over-fitting and a consequent decline in model performance. Figure \ref{fig:4} (b) shows that the metrics decrease as the hidden dimension increases and begin to converge when $d$ exceeds $1024$. However, STELLA-\textit{10k} ($d=32$) already outperforms other Transformer-based models. This further substantiates that STPE is more effective than the complex architectures of Transformer-based models. 

\begin{table}[t]
    \centering
    \caption{Improvements obtained by the adoption of STPE.}
    \resizebox{\linewidth}{!}{
    \begin{tabular}{c|c|cc|cc|cc|cc}
    \toprule
         \multicolumn{2}{c|}{Datasets}& \multicolumn{2}{c|}{GlobalWind} & \multicolumn{2}{c|}{GlobalTemp}& \multicolumn{2}{c|}{ChinaWind} & \multicolumn{2}{c}{ChinaTemp}\\
         \midrule
         \multicolumn{2}{c|}{Metric}& RMSE&MAE  & RMSE&MAE& RMSE&MAE & RMSE&MAE\\
         \midrule
          \multirow{2}*{PatchTST}& Original& 1.973& 1.332 & 3.130&2.062& 7.295&5.033 & 4.909&3.600\\
 & \textbf{+STPE}& \textbf{1.947}& \textbf{1.307} & \textbf{3.077}&\textbf{1.994}& \textbf{7.196}&\textbf{4.978} & \textbf{4.490}&\textbf{3.303}\\
         \midrule
 \multirow{2}*{DSformer}& Original& 2.007& 1.347 & 3.089&2.062
& 7.311&5.000 & 4.919&3.605\\
 & \textbf{+ STPE}& \textbf{1.982}& \textbf{1.325} & \textbf{3.044}&\textbf{2.020}& \textbf{7.243}& \textbf{4.950} & \textbf{4.762}&\textbf{3.416}\\
 \midrule
 \multirow{2}*{iTransformer}& Original& 1.969& 1.314  & 2.950&1.883& 7.248& 4.995 & 4.292&3.193\\
 & \textbf{+STPE}& \textbf{1.917}& \textbf{1.279} & \textbf{2.855}&\textbf{1.804}& \textbf{7.104}&\textbf{4.914} & \textbf{4.188}&\textbf{3.600}\\
 \bottomrule
 \end{tabular}
 }
    \label{tab:4}
\end{table}

\subsection{Generalization of Spatial-Temporal Position Embedding} \label{sec:generalization}
In this section, we further evaluate the effects of STPE by applying it to Transformer-based models, with the results reported in Table \ref{tab:4}. Only Transformers that independently embed each channel are compatible with STPE (Informer, FEDformer, etc., are incompatible), as our approach generates spatial embeddings for each station individually. The result shows that STPE can significantly enhance the performance of Transformers, enabling them to achieve satisfactory results. In particular, iTransformer achieves state-of-the-art performance on GlobalWind after the application of STPE.

\begin{figure*}[t]
    \centering
    \includegraphics[width=0.9\linewidth]{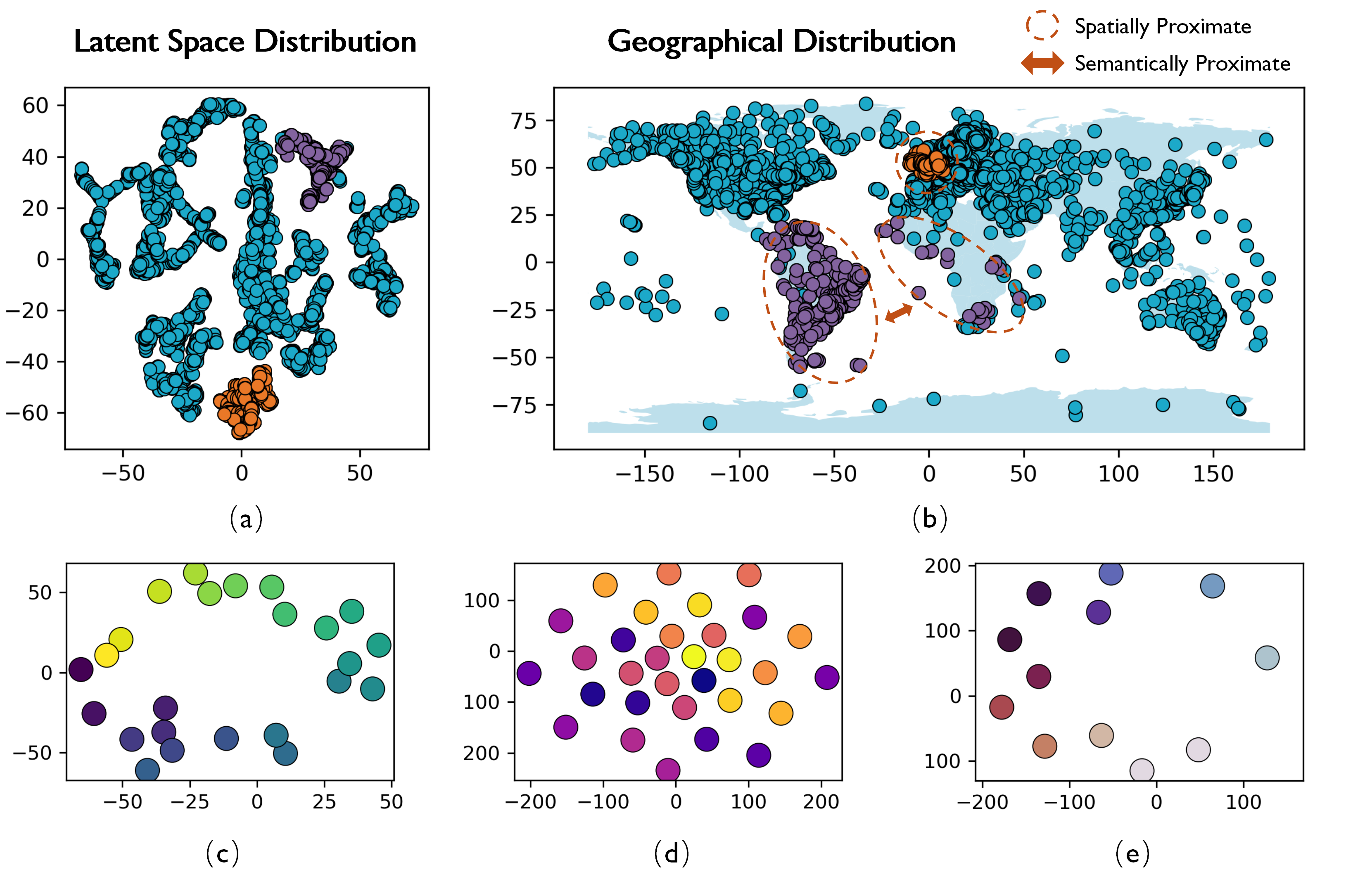}
    \caption{Visualization of STPE. (a) Spatial embedding in the 2D latent space. (b) Geographical distribution of stations. The model learns spatial and semantic similarity through spatial embedding. (c) Hour-in-day embedding $\mathbf T$. (d) Day-in-month embedding $\mathbf D$. (e) Month-in-year embedding $\mathbf M$.}
    \label{fig:embedding}
\end{figure*}

\subsection{Visualization of PE} \label{sec:visualize PE}
In this section, we visualize the STPE to further study its effectiveness. Due to the high dimensionality of the embeddings, we employ t-SNE \cite{van2008visualizing} to visualize them on 2D planes.

\paragraph{Visualization of Spatial Embedding.}
Figure \ref{fig:embedding} (a) indicates that spatial embeddings tend to cluster. The model attempts to learn similar embeddings for spatially or semantically proximate stations, resulting in a clustered structure. To demonstrate this, we have marked two clusters with orange and purple and examined the geographical distribution of the stations within each cluster. As shown in Figure \ref{fig:embedding} (b), the orange cluster is densely distributed in Europe in the geographical space, indicating that the model has learned spatial similarity. Meanwhile, stations in the purple cluster are distributed across South America and Africa, with all distribution areas characterized by tropical or subtropical climates, suggesting that the model has captured semantic similarity.

\paragraph{Visualization of Temporal Embedding.}
Figure \ref{fig:embedding} (c-e) shows the temporal embeddings with colors representing the temporal order. The hour-in-day and month-in-year embeddings form ring-like structures in temporal order, revealing the distinct daily and annual periodicities of weather, which is consistent with humans' common understanding.

\begin{wrapfigure}{r}{0.57\linewidth}
    \centering
    \vspace{-24pt}
    \includegraphics[width=\linewidth]{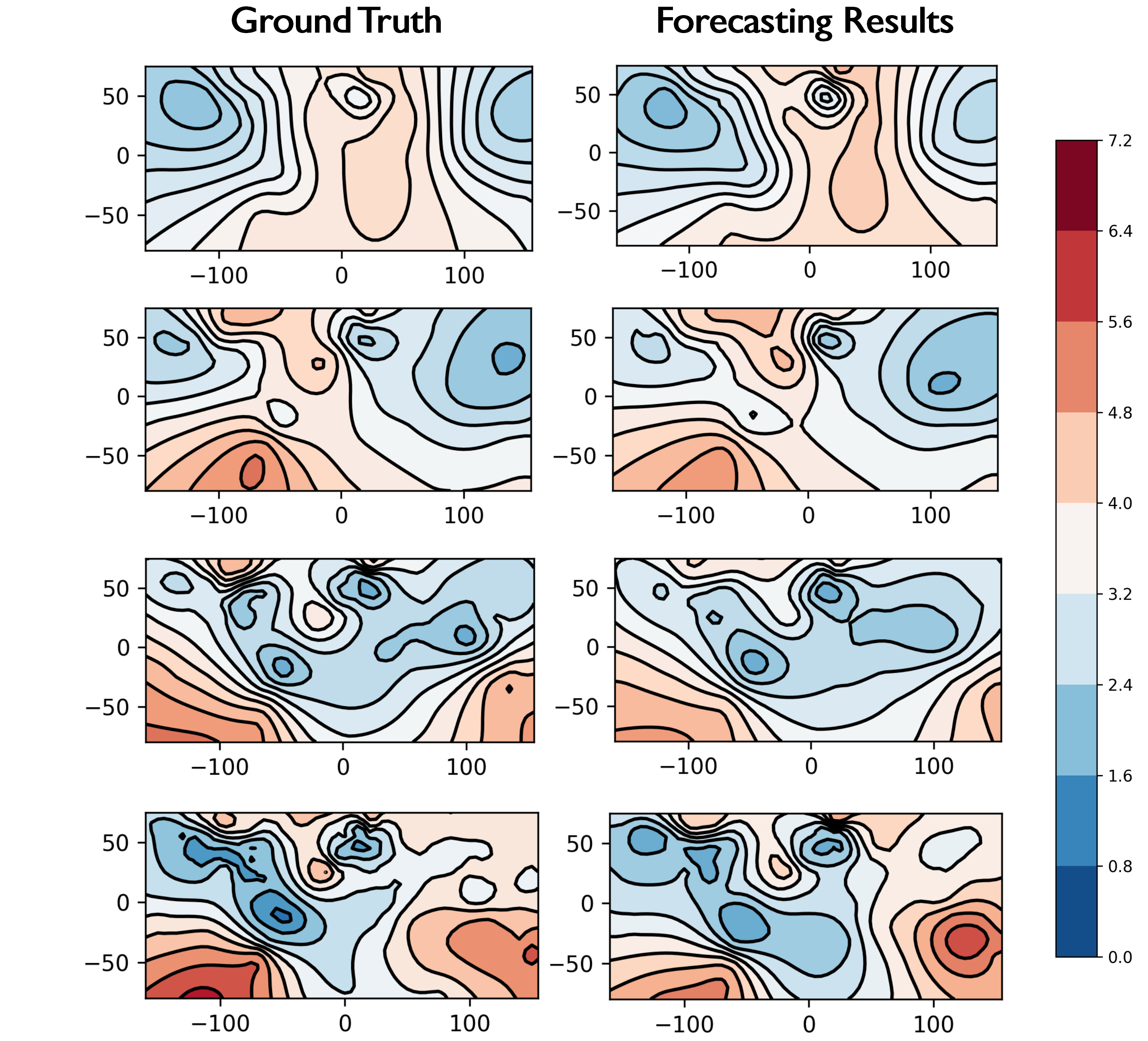}
    \vspace{-10pt}
    \caption{Forecasting results of averaged wind speed with a 6-hour interval and 5° (i.e., $64\times 32$) resolution.}
    \vspace{-24pt}
    \label{fig:5}
\end{wrapfigure}

\subsection{Case Study}

To comprehensively illustrate STELLA's capability to capture the complex spatial-temporal correlations among large-scale stations, we present a visualization of the forecasting results of the GlobalWind dataset from a multi-station perspective. Additional illustrative showcases are available in Appendix \ref{case study}. Kriging \cite{li2025stagann} is employed to interpolate discrete points into a continuous surface, enhancing the visual clarity of spatial variations. As shown in Figure \ref{fig:5}, the predicted results (right column) are closely aligned with the ground-truth values (left column) across all displayed time steps. This high level of consistency confirms that STELLA effectively captures spatial-temporal patterns in global weather data and delivers accurate predictions.

\section{Conclusion}
This work innovatively highlights the significance of STPE in Transformers for ATSF. Even without attention mechanisms, STPE explicitly captures spatial-temporal correlations by integrating geographical coordinates and temporal features, which are inherently linked to atmospheric dynamics. We then present STELLA, a lightweight and effective model for ATSF. We leverage STPE and replace Transformer layers with a simple MLP. STELLA can achieve satisfactory performance across five weather datasets. The paper posits that the incorporation of spatial-temporal knowledge is more effective than intricate model architectures, illuminating novel insights for ATSF.

\paragraph{Limitations and Future Work.} STELLA is limited by its channel-independent modeling and the restricted expressive power of MLPs, and it may produce overly smoothed forecasts for extreme events, which is a common issue with MSE/MAE-trained models. Future work will address these challenges by incorporating covariates and physical constraints. See Appendix \ref{limitation} for more details.

\section*{Acknowledgement}

This work is supported by the NSFC underGrant Nos.62372430 and 62502505, the Youth Innovation Promotion Association CAS No.2023112, the Postdoctoral Fellowship Program of CPSF under Grant Number GZC20251078, the China Postdoctoral Science Foundation No.2025M77154 and HUA Innovation fundings. We sincerely thank all the anonymous reviewers who gerenously contributed their time and efforts.

{
\small

\bibliography{nips}
\bibliographystyle{plain}

}


\newpage
\appendix

\section{Theoretical Proofs}

\subsection{Full Proof of Theorem 1} \label{proof atmos}
\begin{proof}

The fundamental equations of atmospheric motion \cite{marchuk2012numerical} are 

\begin{equation}
\left\{
\begin{aligned}
    & \frac{\mathrm{d} \mathbf{V}}{\mathrm{d} t} = -\frac{1}{\rho}\nabla p -2\mathbf{\Omega}\times \mathbf{V} + \mathbf{g} +\mathbf{F},\\
    & \frac{\mathrm{d} \rho}{\mathrm{d} t} + \rho \nabla \cdot \mathbf{V} = 0,  \\
    & C_{p}\frac{\mathrm{d} T}{\mathrm{d} t} -\frac{1}{\rho} \frac{\mathrm{d} p}{\mathrm{d} t} = Q, \\
    &\frac{\partial q}{\partial t} + \mathbf{V} \cdot \nabla q = S_q, \\
    & p=\rho RT, \\
\end{aligned}
\right.
\label{principle}
\end{equation}
where $\mathbf{V}$ is the velocity, $\rho$ is the air density, $p$ is the pressure, $T$ is the temperature, $q$ is the specific humidity, and others are physical constants.

Expand the equations into scalar form and transform them into spherical coordinates, yielding the following:

\begin{equation}
\left\{
\begin{aligned}
&\frac{\mathrm{d}u}{\mathrm{d}t}=-\frac{1}{\rho}\frac{\partial p}{r \cos \phi \partial \lambda} + fv + \frac{uv\tan \phi}{r} + F_{\lambda}, \\
 & \frac{\mathrm{d}v}{\mathrm{d}t}=-\frac{1}{\rho}\frac{\partial p}{r\partial \phi} - fu - \frac{u^2\tan \phi}{r} + F_{\phi}, \\
  &\frac{\mathrm{d}w}{\mathrm{d}t}=-\frac{1}{\rho}\frac{\partial p}{\partial r} - g + F_{r}, \\
  &\frac{\mathrm{d}\rho}{\mathrm{d}t}=-\rho(\frac{1}{r \cos \phi }\frac{\partial u}{\partial \lambda}+\frac{\partial v}{r\partial \phi}+\frac{\partial w}{\partial r}+\frac{2w}{r}-\frac{v}{r}\tan \phi), \\
  &\frac{\mathrm{d}T}{\mathrm{d}t}=\frac{Q}{C_p}+\frac{1}{\rho C_p}\frac{\mathrm{d} p}{\mathrm{d} t},\\
  &\frac{\mathrm{d} q}{\mathrm{d} t}=S_q,\\
  &p=\rho RT,
\end{aligned}
\right.    
\end{equation}

where $u$ is zonal velocity, $v$ is meridional velocity, and $w$ is vertical velocity. The expansion of $\frac{\mathrm d}{\mathrm{d} t}$ is 
\begin{equation}
    \frac{\mathrm d}{\mathrm dt}=\frac{\partial}{\partial t}+u\frac{\partial}{r \cos \phi\partial \lambda}+v\frac{\partial}{r \partial\phi}+w\frac{\partial}{\partial r}.
\end{equation}
Radial distance $r$ can be further denoted as $r=a+z$, where $a$ is Earth's radius and $z$ is altitude. Since $a$ is a constant and $a>>z$, we have $\frac{\partial}{\partial r}=\frac{\partial}{\partial z}$ and we can approximate $r$ with $a$. Then we render the left side of the equation spatial independent by rearranging terms:

\begin{equation}
\left\{
\begin{aligned}
&\frac{\partial u}{\partial t} =- \frac{u}{a \cos\phi} \frac{\partial u}{\partial \lambda} - \frac{v}{a} \frac{\partial u}{\partial \phi} -w \frac{\partial u}{\partial z}-\frac{1}{\rho}\frac{\partial p}{a \cos \phi \partial \lambda} + fv + \frac{uv\tan \phi}{a} + F_{\lambda}, \\
&\frac{\partial v}{\partial t} =- \frac{u}{a \cos\phi} \frac{\partial v}{\partial \lambda} - \frac{v}{a} \frac{\partial v}{\partial \phi} - w \frac{\partial v}{\partial z} -\frac{1}{\rho}\frac{\partial p}{a\partial \phi} - fu - \frac{u^2\tan \phi}{a} + F_{\phi}, \\
&\frac{\partial w}{\partial t} =- \frac{u}{a \cos\phi} \frac{\partial w}{\partial \lambda} - \frac{v}{a} \frac{\partial w}{\partial \phi} - w \frac{\partial w}{\partial z}  -\frac{1}{\rho} \frac{\partial p}{\partial z} - g + F_r, \\
  &\frac{\partial\rho}{\partial t}=- \frac{u}{a \cos\phi} \frac{\partial \rho}{\partial \lambda} - \frac{v}{a} \frac{\partial \rho}{\partial \phi} - w \frac{\partial \rho}{\partial z}-\rho(\frac{1}{r \cos \phi }\frac{\partial u}{\partial \lambda}+\frac{\partial v}{r\partial \phi}+\frac{\partial w}{\partial r}+\frac{2w}{r}-\frac{v}{r}\tan \phi), \\
&\frac{\partial T}{\partial t}=- \frac{u}{a \cos\phi} \frac{\partial T}{\partial \lambda} - \frac{v}{a} \frac{\partial T}{\partial \phi} - w \frac{\partial T}{\partial z} + \frac{1}{\rho C_p} \left( \frac{\partial p}{\partial t} + \frac{u}{a \cos\phi} \frac{\partial p}{\partial \lambda} + \frac{v}{a} \frac{\partial p}{\partial \phi} + w \frac{\partial p}{\partial z} \right) + \frac{Q}{C_p}, \\
&\frac{\partial q}{\partial t} = -\frac{u}{a \cos\phi} \frac{\partial q}{\partial \lambda} - \frac{v}{a} \frac{\partial q}{\partial \phi} - w \frac{\partial q}{\partial z} + S_q, \\
&p = \rho R T.
\end{aligned}
\right.
\end{equation}
Therefore, for each atmospheric variable $\nu$, we have
\begin{equation}
    \frac{\partial\nu}{\partial t}=f(\nu,t,\lambda,\phi,z).
\end{equation}
\end{proof}

\subsection{Proof of Corollary 2} \label{corollary proof}

\begin{proof}
According to \textbf{Theorem} \ref{theorem 1} , we can integrate both sides of the equation with respect to $t$, from time step $\tau_i$ to step $\tau_j$:
    \begin{equation}
        \nu_{\tau_j}=\nu_{\tau_i}+\int_{\tau_i\Delta t}^{\tau_j\Delta t} f\left(\nu(\lambda, \phi, z,t),\lambda,\phi,z,t\right)\mathrm{d}t,
        \label{inte}
    \end{equation}
where $\Delta t$ is the interval between time steps.

Before proceeding with the mathematical proof, we first illustrate with Figure \ref{mechanism}. A directed edge from $\tau_i$ to $\tau_j$ in the figure denotes the evolution constrained by Eq.(\ref{inte}). Figure \ref{mechanism} (a) shows the mechanism that the state of the atmosphere evolves step by step, which we need to adopt an autoregressive neural network (e.g. RNN) to approximate it. Through a simple topological transformation, we can obtain the mechanism shown in (b), where the unobserved states are calculated by every historical observation. Therefore, we can adopt a neural network to predict all unobserved states in parallel and it constitutes a more robust approach as it fully leverages historical observations.

We provide a detailed mathematical proof in the following. For brevity, $\int_{\tau_i\Delta t}^{\tau_j\Delta t} F(\lambda,\phi,z,t)\mathrm{d}t$ is denoted as $I_{i}^j$. For the unobserved data at step $\tau+k\ (k=1,2,\cdots,T_f)$, it can be represented by:

\begin{equation}
\left\{
\begin{aligned}
    & \nu_{\tau+k}=\nu_\tau+I^{\tau+k}_\tau,\\
     &\nu_{\tau+k}=\nu_{\tau-1}+I^{\tau+k}_{\tau-1},\\
     &\ \ \ \ \ \ \ \ \cdots\\
     &\nu_{\tau+k}=\nu_{\tau-T_h+1}+I^{\tau+k}_{\tau-T_h+1}.
\end{aligned}
\right.
\end{equation}

We can take its linear combination as follows:
    \begin{equation}
     \begin{split}
\nu_{\tau+k}&=\alpha_{k,1}\left(\nu_{\tau}+I_\tau^{\tau+k}\right)+\alpha_{k,2}\left(\nu_{\tau-1}+I_{\tau-1}^{\tau+k}\right)+\cdots+\alpha_{k,T_h}\left(\nu_{\tau-T_h+1}+I_{\tau-T_h+1}^{\tau+k}\right)\\
        &=\boldsymbol{\alpha}_k\left(\boldsymbol{\nu}_{\tau-T_h+1:\tau}+\mathbf{I}_{\tau-T_h+1:\tau}^{\tau+k}\right),
     \end{split}
    \end{equation}
    where $\mathbf{I}_{\tau-T_h+1:\tau}^{\tau+k}$ is $(I_{\tau-T_h+1}^{\tau+k},I_{\tau-T_h}^{\tau+k},\cdots,I_{\tau}^{\tau+k})^{\top}$ and $\boldsymbol\alpha_k\in \mathbb{R}^{T_h}$ satisfies $||\boldsymbol\alpha_k||=1$.

By repeating this procedure from $\nu_{\tau+1}$ to $\nu_{\tau+T_f}$, we have

    \begin{equation}
\left\{
\begin{aligned}
    & \nu_{\tau+1}=\boldsymbol{\alpha}_1\left(\boldsymbol{\nu}_{\tau-T_h+1:\tau}+\mathbf{I}_{\tau-T_h+1:\tau}^{\tau+1}\right),\\
     &\nu_{\tau+2}=\boldsymbol{\alpha}_2\left(\boldsymbol{\nu}_{\tau-T_h+1:\tau}+\mathbf{I}_{\tau-T_h+1:\tau}^{\tau+2}\right),\\
     &\ \ \ \ \ \ \ \ \cdots\\
     &\nu_{\tau+T_f}=\boldsymbol{\alpha}_{T_f}\left(\boldsymbol{\nu}_{\tau-T_h+1:\tau}+\mathbf{I}_{\tau-T_h+1:\tau}^{\tau+T_f}\right).
\end{aligned}
\right.
\end{equation}
    
Therefore, we have
\begin{equation}
    \boldsymbol\nu_{\tau+1:\tau+T_f}=\mathbf A \boldsymbol{\nu}_{\tau-T_h+1:\tau}+\mathbf F(\boldsymbol{\nu}_{\tau-T_h+1:\tau},\lambda, \phi, z, \tau),
\end{equation}
where $\mathbf A=\left(\boldsymbol{\alpha}_1,\boldsymbol{\alpha}_2,\cdots,\boldsymbol{\alpha}_{T_f}\right)^\top$ satsifies $||\mathbf A||_\infty=1$ and $\mathbf F$ is the combination of $\mathbf I$.

\end{proof}

\begin{figure}[t]
    \centering
    \includegraphics[width=0.7\linewidth]{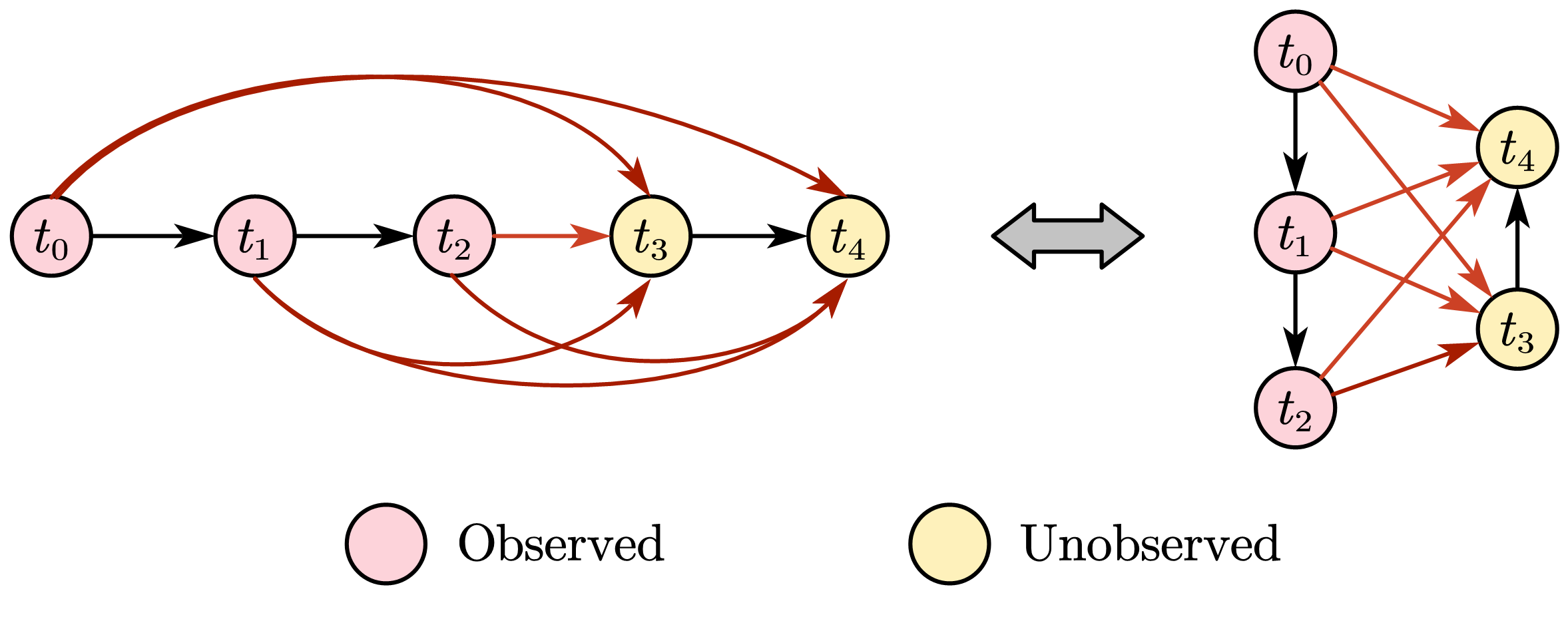}
    \caption{Mechanism of the evolution of the atmosphere state. (a) The atmosphere state evolves step by step through the black edges. (b) Predict the unobserved states in parallel through the red edges. (a) and (b) are topologically equivalent.}
    \label{mechanism}
\end{figure}

\subsection{Proof of Theorem 3} \label{proof param}

\begin{proof}
The data embedding layer maps the input data into the latent space with dimension $d$, thereby introducing $(T_h+1)d)$ parameters. Analogously, the regression layer introduces $(T_f+1)d$ parameters. The parameter count of a $L$-layer MLP with residual connection is $2Ld(d+1)$.

For the STPE module, the spatial embedding costs $(3+1)d+d(d+1)$ parameters, and the temporal embedding costs $(24+31+12)d$ parameters. 

Therefore, the total number of parameters required for STELLA is $2(d+1)dL+(T_h+T_f+2)d+(d+72)d$.
\end{proof}

\section{Overall Workflow of STELLA}

The overall workflow of STELLA is provided in Algorithm \ref{alg:1}.

    \begin{algorithm}[h]
    \caption{Overall workflow of STELLA.}
    \label{alg:1}
    \begin{algorithmic}[1]
        \STATE\textbf{INPUT:} historical data $\mathbf{X}\in \mathbb{R}^{T_h\times N\times C}$, geographical coordinates $\boldsymbol{\Sigma}\in \mathbb{R}^{N\times 3}$, the first time step $t$  
        \STATE\textbf{OUTPUT:} forecasting result $\mathbf Y\in \mathbb{R}^{T_f\times N\times C}$    
        \STATE   $\mathbf{X} = \mathbf{X}.\text{transpose}\left(1,-1 \right)$ \quad/* $\mathbf{X}\in\mathbb{R}^{C\times N\times T_h}$ */
        \STATE $\mathbf{H} = \text{Linear}\left(\mathbf{X}\right)$ \quad /* Input layer, $\mathbf{H}\in\mathbb{R}^{C\times N\times d}$ */
        \STATE $\mathbf{SE} = \text{FFN}\left(\boldsymbol{\Sigma}\right)$\quad/* $\mathbf{SE}\in\mathbb{R}^{N\times d}$ */
        \STATE $\mathbf{SE} = \mathbf{SE}.\text{repeat}(C,1,1)$\quad/* $\mathbf{S}\in\mathbb{R}^{C\times N\times d}$ */
        \STATE $hour,day,mon=\text{time\_feature}(t)$\quad/* Obtain hour, month and day from $t$ */
        \STATE $\mathbf{T}=\mathbf{T}[hour].\text{repeat}(C,N,1)$
        \STATE $\mathbf{D}=\mathbf{D}[day].\text{repeat}(C,N,1)$
        \STATE $\mathbf{M}=\mathbf{M}[mon].\text{repeat}(C,N,1)$
        \STATE $\mathbf{Z}_0 = \mathbf{H}+\mathbf{SE}+\mathbf{T}+\mathbf{D}+\mathbf{M}$\quad/* $\mathbf{Z}_0\in\mathbb{R}^{C\times N\times d}$ */
        \\ /* MLP encoder */
        \FOR{$l \ \textbf{in} \ \{0,1,\cdots,L-1\}$}
        \STATE $\mathbf{Z}_{l+1}=\text{FFN}_l\left(\mathbf{Z}_l\right)+\mathbf{Z}_l$
        \ENDFOR
        \STATE $\mathbf{Y} = \text{Linear}\left(\mathbf{Z}_L\right)$\quad/* Regression layer, $\mathbf{Y}\in\mathbb{R}^{C\times N\times T_f}$ */
        \STATE $\mathbf{Y} = \mathbf{Y}.\text{transpose}\left(1,-1 \right)$
        \STATE \textbf{return} $\mathbf{Y}$
    \end{algorithmic}
\end{algorithm}

\section{Experimental Details}
\subsection{Dataset Description} \label{dataset}

To evaluate the comprehensive performance of the proposed model, we conduct experiments on five ATSF datasets with different temporal resolutions and spatial coverages including: 

\begin{itemize}
    \item \textbf{GlobalWind and GlobalTemp} \cite{wu2023interpretable} are collected from the National Centers for Environmental Information (NCEI)\footnote{https://www.ncei.noaa.gov/data/global-hourly/access}. These datasets contain the hourly averaged wind speed and temperature of 3,850 stations around the world, spanning two years from 1 January 2019 to 31 December 2020. Please note that these datasets are rescaled (multiplied ten times) from the raw datasets.
    \item \textbf{ChinaWind and ChinaTemp} are also collected from NCEI\footnote{https://www.ncei.noaa.gov/data/global-summary-of-the-day/access}. These datasets contain the daily averaged wind speed and temperature of 396 stations in China (382 stations for \textbf{Temp\_CN} due to missing values), spanning 10 years from 1 January 2013 to 31 December 2022.
    \item \textbf{ChinaPM2.5} is collected from CNEMC\footnote{https://air.cnemc.cn}. It contains the hourly averaged wind speed of 1,316 stations in China, spanning 4 years from 1 January 2020 to 31 December 2024. The original dataset only provides the latitudes and longitudes of stations and we obtain the elevations of stations through Open-Elevation\footnote{https://open-elevation.com}.
\end{itemize}

The statistics of the datasets are shown in Table \ref{statistics} and the station distributions are shown in Figure \ref{stations}.

\begin{table}[h]
    \centering
      \caption{Statistics of datasets.}
         \begin{tabular}{c|c|c|c|c|c}
    \toprule
         \textsc{Dataset}& \textsc{Coverage}& \textsc{Station Num}& \textsc{Sample Rate}&\textsc{Time Span}&\textsc{Length}\\
         \midrule
         GlobalWind& 
     Global& 3,850& 1 hour&2 years &17,544\\
 GlobalTemp& Global& 3,850& 1 hour&2 years &17,544\\
 ChinaWind& National& 396& 1 day&10 years &3,652\\
 ChinaTemp& National& 382& 1 day&10 years &3,652\\
 ChinaPM2.5& National& 1,316& 1 hour& 5 years&43,539\\
 \bottomrule
 \end{tabular}
    \label{statistics}
\end{table}

\begin{figure*}[h]
	\centering
    \subfigure[]{
\includegraphics[width=0.27\linewidth]{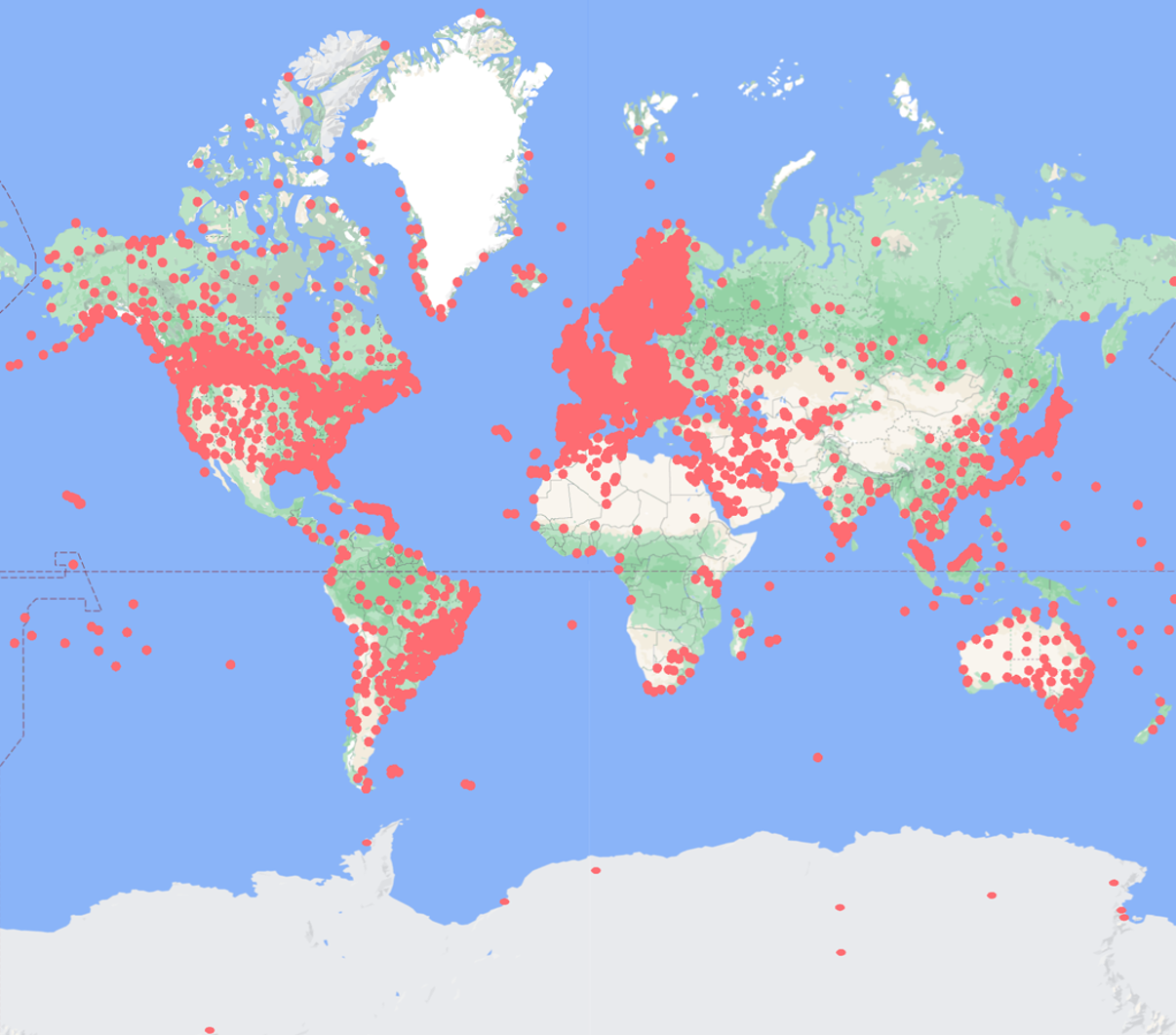}
}
	 \subfigure[]{
\includegraphics[width=0.33\linewidth]{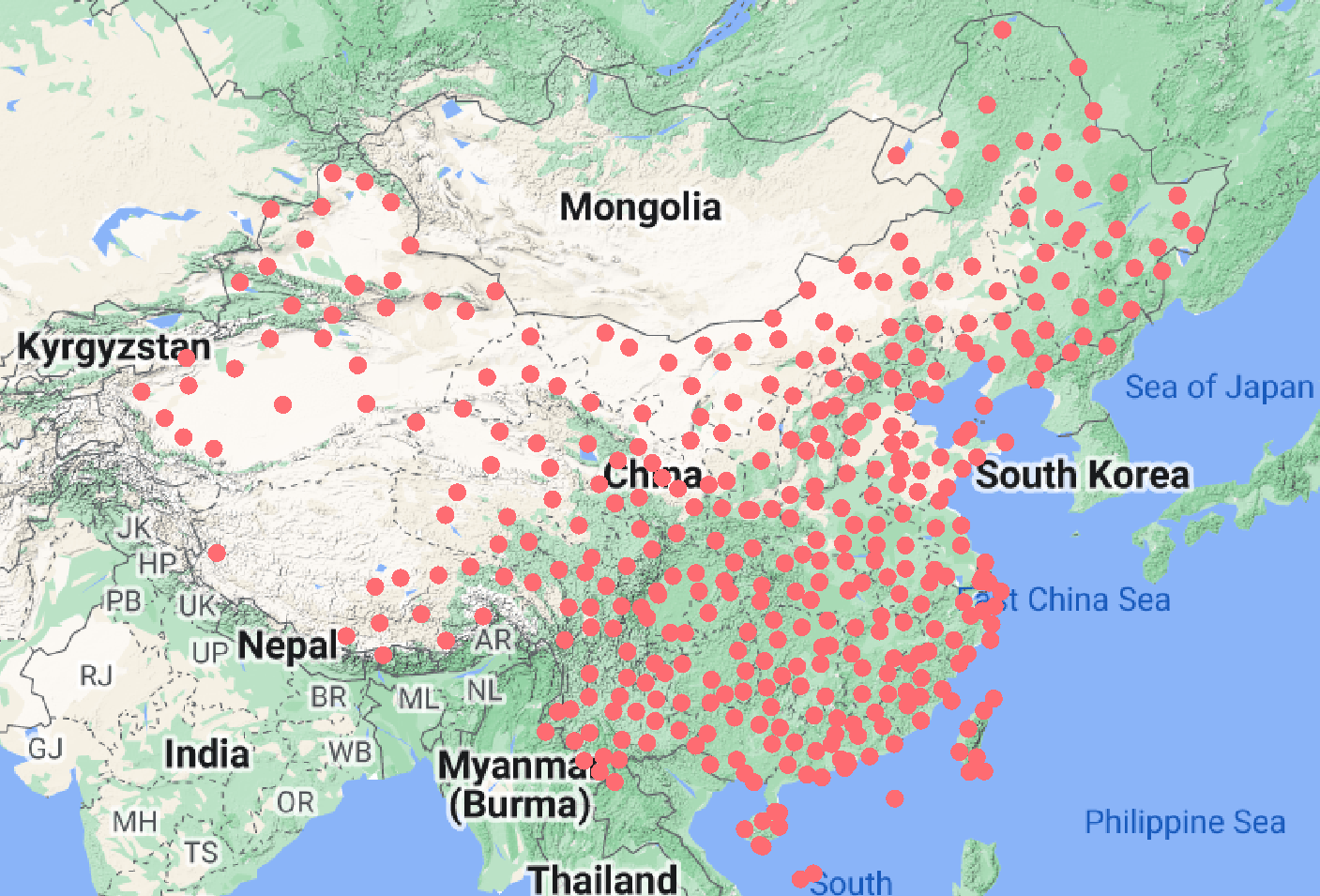}
}
	 \subfigure[]{
\includegraphics[width=0.3\linewidth]{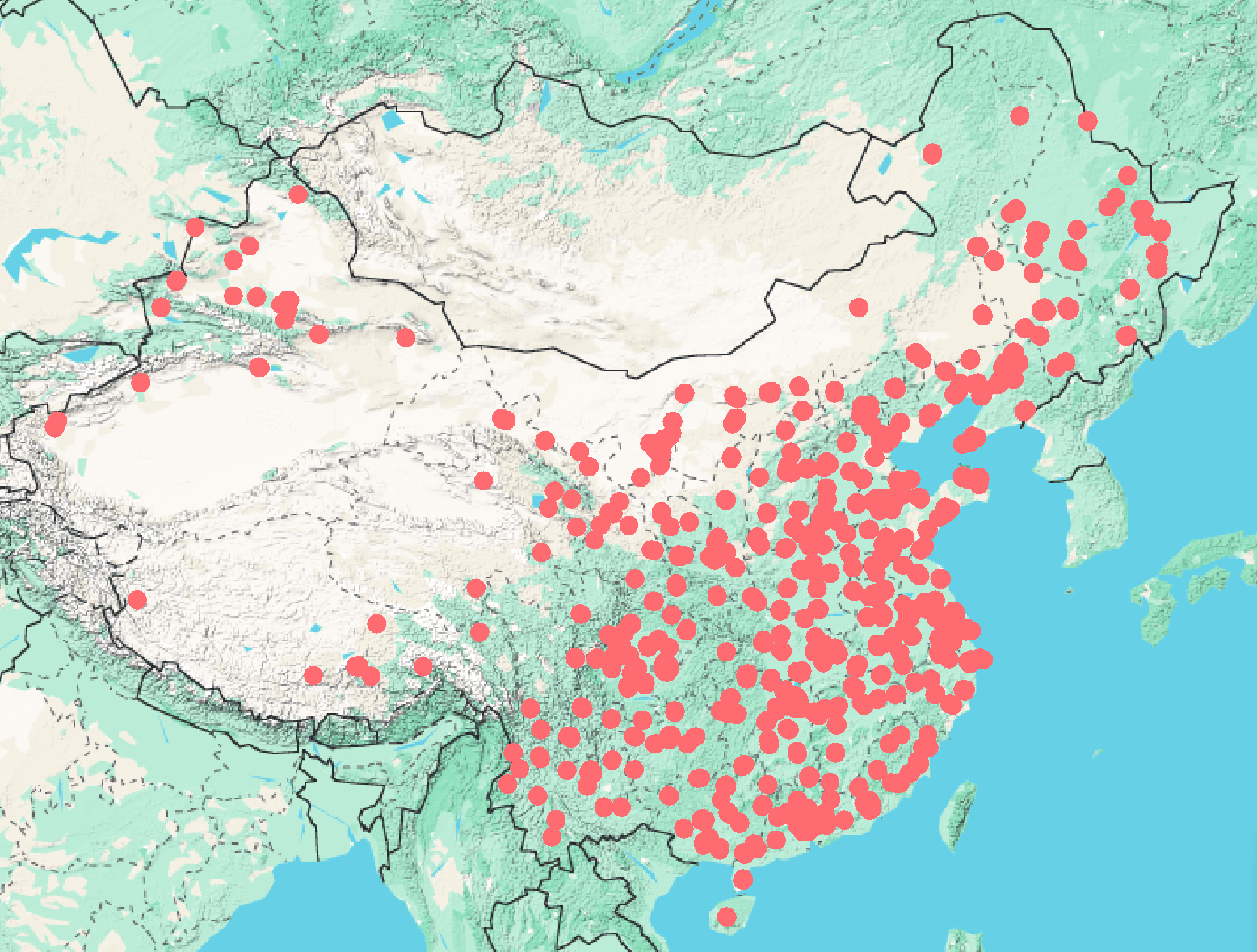}
}
	\caption{Distributions of the stations. (a) GlobalWind and GlobalTemp. (b) ChinaWind and ChinaTemp. (c) ChinaPM2.5.}
	\label{stations}
\end{figure*}

\subsection{Introduction to Baselines} \label{baseline}
\begin{itemize}
    \item\textbf{HI} \cite{cui2021historical}, short for historical inertia, is a simple baseline that adopts the most recent historical data as the prediction results.
    \item  \textbf{ARIMA} \cite{shumway2000time}, short for autoregressive integrated moving average, is a statistical forecasting method that uses the combination of historical values to predict future values.
    \item \textbf{AGCRN} \cite{AGCRN} is a STGNN that integrates adaptive graph convolution and recurrent networks to dynamically capture spatiotemporal dependencies in multivariate time series.
    \item \textbf{MTGNN} \cite{MTGNN} is a GNN-based model for multivariate time series forecasting. It can automatically learn the hidden spatial dependencies among variables.
    \item \textbf{GTS} \cite{GTS} is a STGNN that learns the structure simultaneously with the GNN when the graph is unknown.
    \item \textbf{Informer} \cite{zhou2021informer} is a Transformer for time series forecasting (TSF) with a sparse self-attention mechanism.
    \item \textbf{FEDformer} \cite{zhou2022fedformer} is a frequency-enhanced Transformer combined with seasonal-trend decomposition to capture the overall trend of time series.
    \item \textbf{DSformer} \cite{yu2023dsformer} utilizes double sampling blocks to model both local and global information.
    \item \textbf{PatchTST} \cite{nietime} divides the input time series into patches, which serve as input tokens of Transformer.
    \item \textbf{iTransformer} \cite{liuitransformer} is a Transformer for TSF that simply applies the attention and Feed-Forward Network (FFN) on the inverted dimensions, thereby enhancing the Transformer's capability to capture multivariate correlations.
    \item \textbf{DUET} \cite{qiu2025duet} is a dual clustering enhanced model for TSF that clusters temporal and channel dimensions to learn the complex correlations of time series.
    \item \textbf{DLinear} \cite{zeng2023transformers} is a lightweight baseline for TSF, which consists of a linear model and a time series decomposition module.
    \item \textbf{N-BEATS} \cite{oreshkinn} utilizes backward and forward residual links and a very deep stack of fully-connected layers.
    \item \textbf{FITS} \cite{xufits} is a lightweight baseline for TSF that employs a complex-valued linear layer to learn amplitude scaling and phase shifts, enabling interpolation in the complex frequency domain.
    \item \textbf{AirFormer} \cite{liang2023airformer} employs a dartboard-like mapping and local windows to restrict attention to focusing solely on local information.
    \item  \textbf{Corrformer} \cite{wu2023interpretable} utilizes a decomposition framework and replaces attention mechanisms with a more efficient multi-correlation mechanism.
    \item \textbf{MRIformer} \cite{yu2024mriformer} employs a hierarchical tree structure, stacking attention layers to capture correlations from multi-resolution data obtained by downsampling.
\end{itemize}

\subsection{Evaluation Metrics}
The evaluation metrics we used in the paper are defined as follows.

\textbf{Mean Absolute Error (MAE)}

\begin{equation}
    \text{MAE}(\hat{\mathbf{Y}}, \mathbf Y)=\frac{1}{N\cdot C\cdot T_f}\sum_{i=1}^N\sum_{j=1}^C\sum_{k=1}^{T_f} |\hat{\mathbf{Y}}^{i,j}_{t}-\mathbf Y^{i,j}_{t}|.
\end{equation}

\textbf{Root Mean Square Error (RMSE)}

\begin{equation}
    \text{RMSE}(\hat{\mathbf{Y}}, \mathbf Y)=\sqrt{\frac{1}{N\cdot C\cdot T_f}\sum_{i=1}^N\sum_{j=1}^C\sum_{k=1}^{T_f} \left(\hat{\mathbf{Y}}^{i,j}_{t}-\mathbf Y^{i,j}_{t}\right)^2}.
\end{equation}

\subsection{Optimal Settings} \label{sec:optimal set}
For reproducibility purposes, we provide the optimal hyperparameters of STELLA on the five datasets, as illustrated in Figure \ref{optimal}. All experiments were performed on an NVIDIA GeForce RTX 4090 24GB GPU.

\begin{table}[h]
    \centering
    \caption{The optimal settings of STELLA.}
    \resizebox{\linewidth}{!}{
    \begin{tabular}{c|c|c|c|c|c|c}
    \toprule
          \multicolumn{2}{c|}{Settings}&  GlobalWind& GlobalTemp& ChinaWind& ChinaTemp&ChinaPM2.5\\
         \midrule
          \multirow{4}*{Network Architecture}&Hidden Dimension& 1024 & 2048& 64& 32&256\\
   &Layers of MLP& 2& 2& 2& 3&2\\
   &Activation Function& ReLU& ReLU& ReLU& ReLU&ReLU\\
   &Dropout& 0.2& 0.2& 0.2& 0.2&0.2\\
  \midrule
   \multirow{8}*{Optimization}&Batch Size& 32& 32& 32& 32&32\\
  &Epoch& 100& 100& 100& 100&100\\
  &Optimizer& Adam& Adam& Adam& Adam&Adam\\
     &Learning Rate& 5e-4& 5e-4& 5e-4& 5e-4&5e-4\\
     &Weight Decay& 5e-4& 5e-4& 5e-4& 5e-4&5e-4\\
     &LR Scheduler& MultistepLR& MultistepLR& MultistepLR& MultistepLR&MultistepLR\\
     &\ \ \ \ \ -\ Milestone& [1,50]& [1,50]& [1,50]& [50]&[1,25,50]\\
     &\ \ \ \ \ \ \ \ \ \ \ \ \ \ \ \ \ \ -\ $\gamma$& 0.5& 0.5& 0.5& 0.5&0.5\\
  \bottomrule
  \end{tabular}
  }
    \label{optimal}
\end{table}

\section{Additional Experimental Results}

\subsection{Comparison with Numerical Methods} \label{nwp}
In this section, we compare our model with the numerical weather prediction (NWP) methods in short-term global weather forecasting tasks. Conventional NWP methods use PDEs to describe the atmospheric state transitions across grid points and solve them through numerical simulations. Currently, the ERA5 from the European Centre for Medium-Range Weather Forecasts (ECMWF) and the Global Forecast System (GFS) from NOAA are the most advanced global forecasting models. ERA5 provides gridded global forecasts at a 0.5° resolution while GFS at a 0.25° resolution.

To make the comparison practical, we utilize bilinear interpolation with height correction to obtain the results for scattered stations, which is aligned with the convention in weather forecasting \cite{bauer2015quiet, wu2023interpretable}. The results are shown in Table \ref{tab:6}.

Both ERA5 and GFS fail to provide accurate predictions, which indicates that grid-based NWP methods are inadequate for fine-grained station-based predictions. In contrast, STELLA can accurately forecast the global weather for worldwide stations, significantly outperforming the numerical methods.

\begin{table}[h]
    \centering
    \caption{Forecasting results from NWP methods and our model on global weather datasets.}
    \begin{tabular}{c|cc|cc}
    \toprule
        \multirow{2}*{Methods} &   \multicolumn{2}{c|}{GlobalWind}& \multicolumn{2}{|c}{GlobalTemp}\\
         \cmidrule(lr){2-3}\cmidrule(lr){4-5}
 & MSE& MAE& MSE&MAE\\
 \midrule
 ERA5 (0.5°)& 2.606& 1.847& 5.298&3.270\\
  GFS (0.25°)& 3.161& 2.340& 3.864&2.287\\
 \midrule
         \textbf{STELLA}& \textbf{1.919}& \textbf{1.284}& \textbf{2.724}& \textbf{1.858} \\
         \bottomrule
    \end{tabular}
    \label{tab:6}
\end{table}

\subsection{Efficiency Analysis under Limited Computational Resources} \label{efficiency limited}

In many application scenarios, training a DL model from scratch is necessary, even when computational resources are limited. However, complex model architectures come with significant costs, including hundreds of millions of parameters and extended training times, which hinder their applicability. In the era of large models \cite{fm,blast,DiversityLLM,xu2021artificial}, this phenomenon is particularly evident. STELLA comes with an efficient solution for this. In this section, we conduct a further efficiency analysis under limited computational resources. To simulate scenarios with limited computational resources, we utilize an Intel Xeon Gold 6330 CPU to train and test models. The top five performing models on the GlobalWind dataset are selected for to experiment. Table \ref{efficiency} presents the training time and inference time of different models on CPU.

The following observations can be made: (1) Except for STELLA, the other top five models are all based on the Transformer architecture. (2) The training times for these models on a CPU are impractical. For instance, training Corrformer for 50 epochs would take approximately 10 months. In contrast, STELLA can be efficiently trained in a CPU environment, requiring only about 2 hours to train STELLA-\textit{10k} for 50 epochs. Furthermore, due to its compact parameter size and straightforward computations, STELLA is highly suitable for deployment on edge devices for inference tasks.

\begin{table}[h]
    \centering
    \caption{The training time and inference time of the top five models on CPU.}
    \resizebox{\linewidth}{!}{
    \begin{tabular}{c|c|c|c}
    \toprule
         \textsc{Methods}&\textsc{Performance Ranking}&\textsc{Training Time / Epoch}&\textsc{Inference Time / Sample}\\
   \midrule
 PatchTST& 5& 2.25h&0.10s\\
 Corrformer& 4& 141h&3.90s\\
 iTransformer&3& 17.5h&0.70s\\
  AirFormer&2& 56.1h&1.77s\\
 \midrule
         \textbf{STELLA-\textit{10k}}& 1 & \textbf{141s}&\textbf{2ms}\\
         \bottomrule
    \end{tabular}}
    
    \label{efficiency}
\end{table}

\section{Discussion}
\subsection{Comparison between STELLA and Other Lightweight Methods}
In the main text, we provide a comprehensive comparison between STELLA and Transformer-based models. Here, we present a further discussion about the distinctions between STELLA and other lightweight models in terms of both performance and efficiency. We conduct additional experiments on the GlobalWind dataset and compare STELLA to three lightweight TSF models, N-BEATS \cite{oreshkinn}, DLinear \cite{zeng2023transformers}, FITS \cite{xufits}. Table \ref{tab:efficiency lightweight} presents the performance-efficiency comparison results, from which we derive the following conclusions:

\begin{table}[h]
    \centering
    \caption{The training time and inference time of the top five models on CPU.}
    \begin{tabular}{c|c|c|cl}
    \toprule
         \textsc{Methods}&\textsc{Performance Ranking}&\textsc{Params}&\textsc{Epoch Time}&\textsc{Max Mem.}\\
   \midrule
 N-BEATS& 10& 121.78k& 37s&2.00GB\\
 DLinear& 7& 2.35k&27s &1.10GB\\
 FITS&9& 1.80k&29s &1.27GB\\
 \midrule
         \textbf{STELLA-\textit{10k}}& 1& 9.98k&30s &792MB\\
         \bottomrule
    \end{tabular}
    \label{tab:efficiency lightweight}
\end{table}

\begin{itemize}
    \item \textbf{Performance.} In terms of performance, STELLA achieves the top ranking among the 17 baselines (15 of which are DL-based), demonstrating a substantial lead over other lightweight models. This superiority stems from the fact that other lightweight models fail in modeling spatial correlations. Taking DLinear as an example, its use of a single linear layer to predict data across all sites is inherently unsuitable for multi-station forecasting scenarios.

    \item \textbf{Efficiency.} We analyze efficiency using three metrics: parameter counts, training time per epoch, and maximum GPU memory usage. FITS and DLinear outperform STELLA in terms of parameter counts. However, STELLA achieves comparable training speed to other lightweight models, while its GPU memory usage is even \textit{lower}. This is because other lightweight models incorporate additional operations to balance performance. For instance, DLinear introduces convolutional modules, while FITS employs complex frequency-domain interpolation. Despite having linear layers as their backbone, these models are less efficient than anticipated. Overall, STELLA maintains a high level of efficiency without compromising performance.
\end{itemize}

\subsection{Comparison between STELLA and PINNs}

While PINNs (physics-informed neural networks) often incorporate physical constraints or regularization, our approach is different in mechanism and purpose. Instead of using physical equations as loss terms or auxiliary supervision in ATSF tasks \cite{DeepPhysiNet,AirPhyNet} , we draw theoretical motivation from the governing PDEs of atmospheric dynamics to construct input representations (STPE) that inject geographical and temporal priors directly into the model. To the best of our knowledge, this is the first work to prove, both theoretically and empirically, that such a representation alone, even coupled with a simple MLP, can surpass many state-of-the-art ATSF models in both accuracy and efficiency.

\subsection{Can STPE be Applied to Linear Model?}

As shown in \S \ref{sec:generalization}, STPE can be applied to Transformer-based models to enhance their performance, which naturally raises a question: \textit{Can STPE also be applied to more lightweight methods (e.g., a linear model)?}

To address this, we conducted extensive experiments and found that the improvement was relatively marginal, especially in global weather forecasting tasks (GlobalWind and GlobalTemp datasets), whereas more substantial gains were observed in national-scale tasks (ChinaWind and ChinaTemp datasets), as shown in Table \ref{tab:dlinear}. According to \textbf{Theorem} \ref{theorem 1}, the evolution of atmospheric states is nonlinear, thus, the fitting capability is the core limitation for linear models.

\begin{table}[h]
    \centering
     \caption{Improvements obtained by the adoption of STPE.}
    \resizebox{\linewidth}{!}{
    \begin{tabular}{c|c|cc|cc|cc|cc}
    \toprule
         \multicolumn{2}{c|}{Datasets}& \multicolumn{2}{c|}{GlobalWind} & \multicolumn{2}{c|}{GlobalTemp}& \multicolumn{2}{c|}{ChinaWind} & \multicolumn{2}{c}{ChinaTemp}\\
         \midrule
         \multicolumn{2}{c|}{Metric}& RMSE&MAE  & RMSE&MAE& RMSE&MAE & RMSE&MAE\\
         \midrule
          \multirow{2}*{DLinear}& Original& 2.005& 1.350 & 3.149&\textbf{2.072}& 7.309&5.031 & 4.990&3.659\\
 & \textbf{+STPE}& \textbf{2.002}& \textbf{1.348} & \textbf{3.138}&2.078& \textbf{7.250}&\textbf{5.002} & \textbf{4.881}&\textbf{3.596}\\
 \bottomrule
 \end{tabular}
 }
    \label{tab:dlinear}
\end{table}

\section{Case Study} \label{case study}

In the main text, we present multi-station collaborative prediction results to provide an intuitive understanding of STELLA's ability to capture spatial correlations and perform collaborative predictions. Here, to enable a clear comparison among different models, we provide supplementary prediction cases from individual stations. We select three representative datasets for each ATSF task, and the results are given by the following advanced models: STELLA, iTransformer \cite{liuitransformer}, Corrformer \cite{wu2023interpretable}, AirFormer \cite{liang2023airformer}, PatchTST \cite{nietime}, DLinear \cite{zeng2023transformers}. Figure \ref{fig:case_global_wind}-\ref{fig:case_cntemp} indicates that STELLA provides the most accurate prediction and demonstrates superior performance among the models.

\begin{figure}
    \centering
    \includegraphics[width=\linewidth]{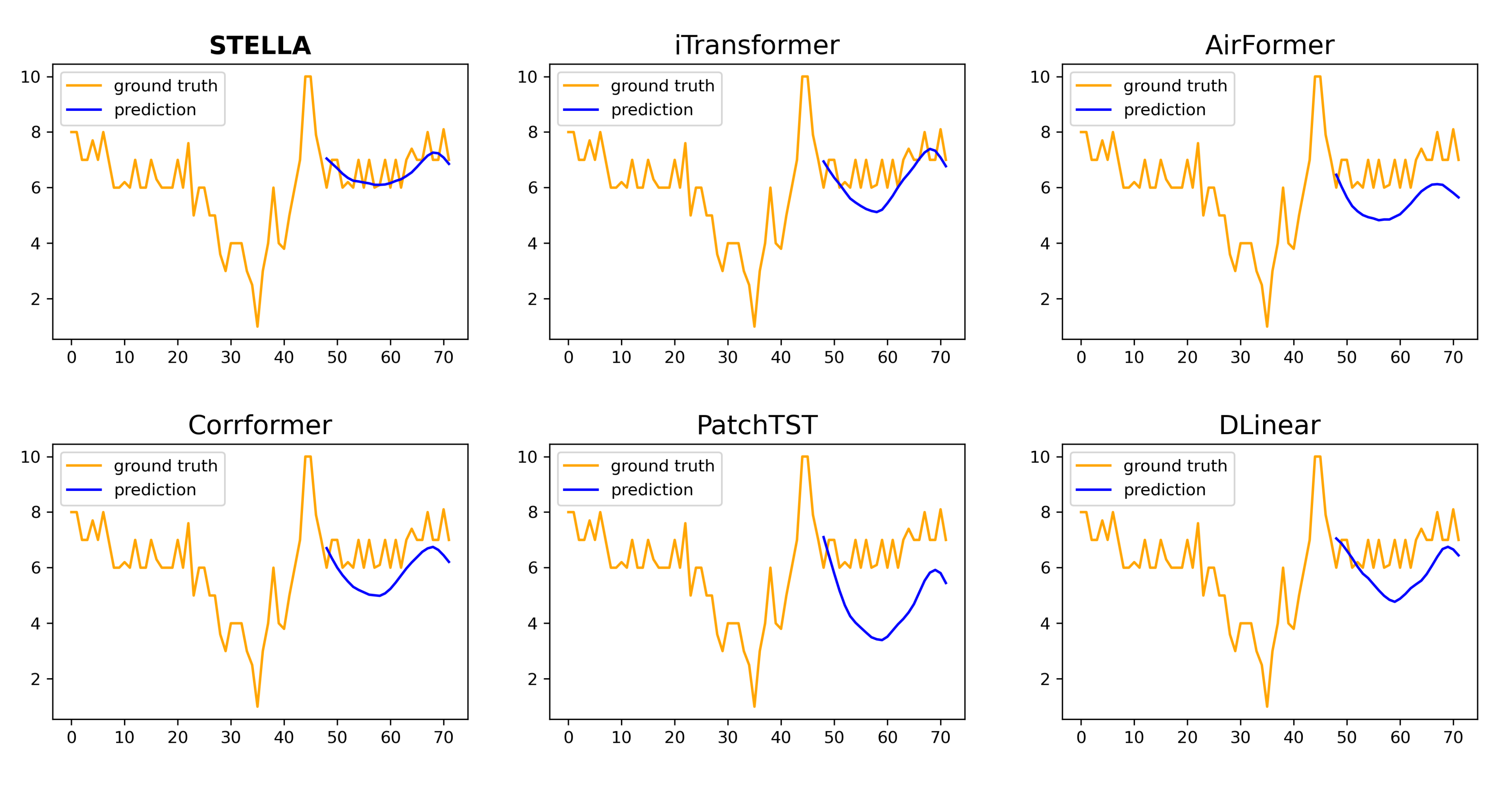}
    \caption{Visualization of the prediction results on the GlobalWind dataset.}
    \label{fig:case_global_wind}
\end{figure}

\begin{figure}
    \centering
    \includegraphics[width=\linewidth]{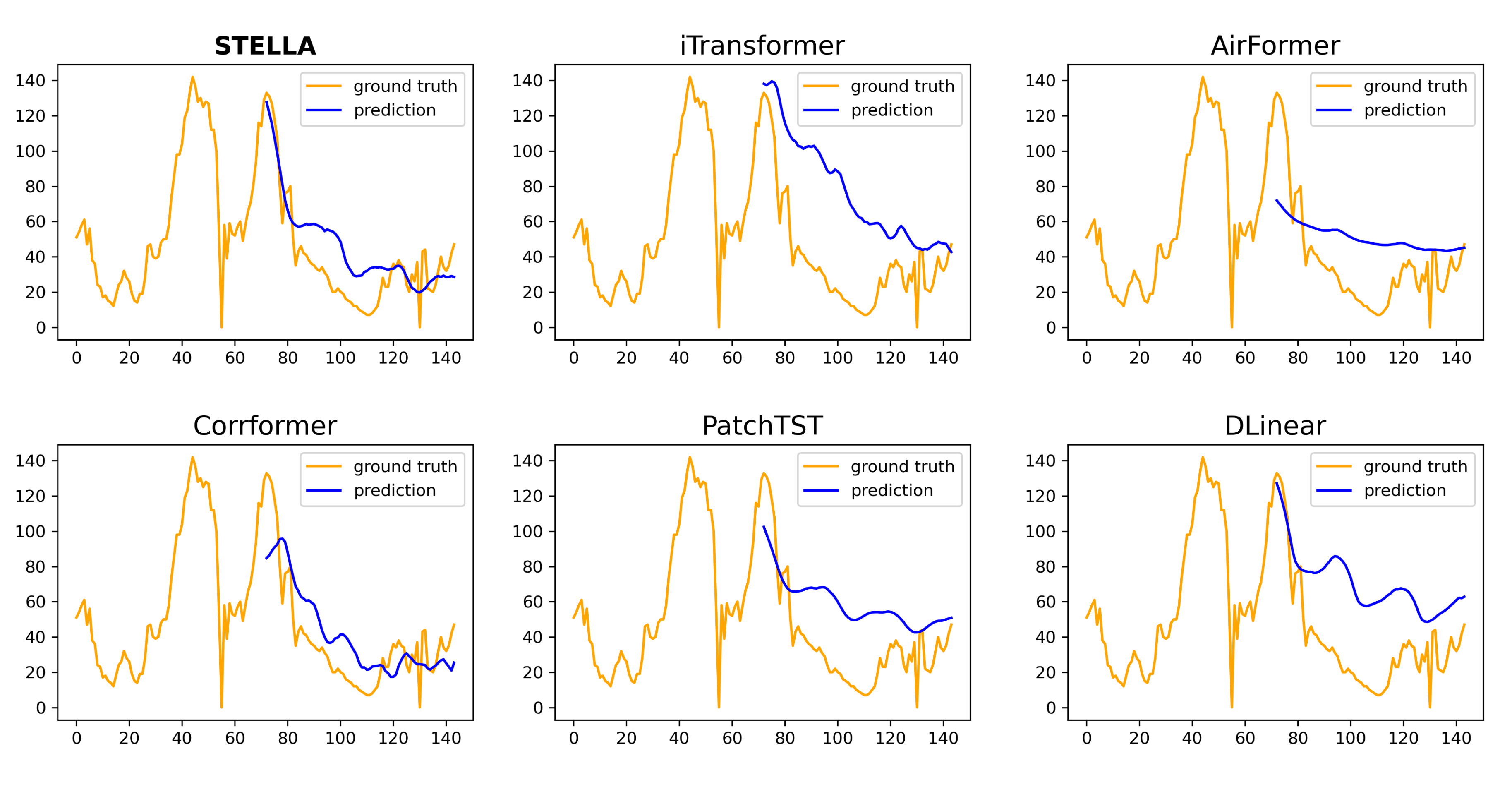}
    \caption{Visualization of the prediction results on the ChinaPM2.5 dataset.}
    \label{fig:case_PM2.5}
\end{figure}

\begin{figure}
    \centering
    \includegraphics[width=\linewidth]{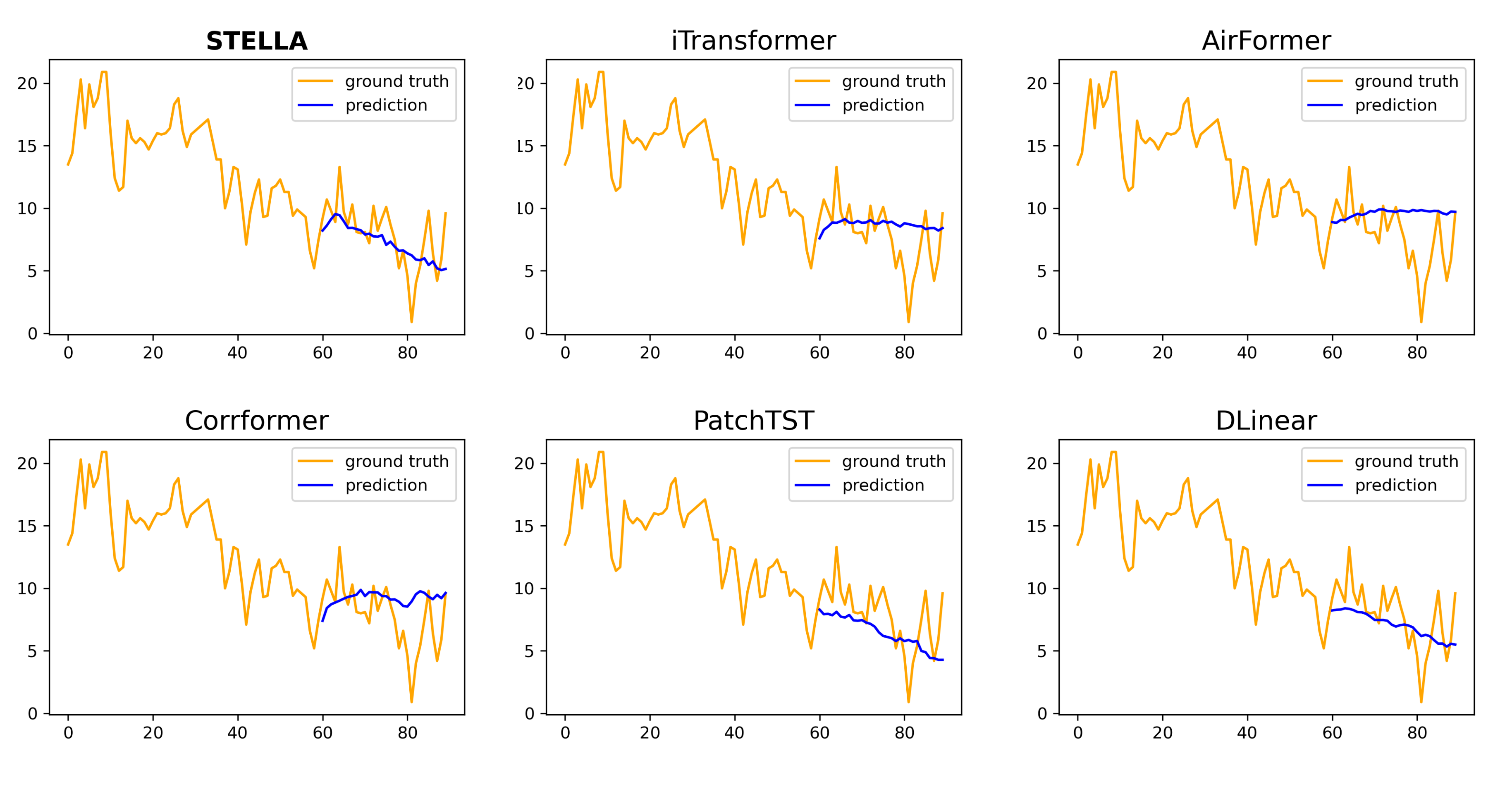}
    \caption{Visualization of the prediction results on the ChinaTemp dataset.}
    \label{fig:case_cntemp}
\end{figure}

\section{Limitations and Future Work} \label{limitation}

\paragraph{Multivariate Correlations.} Atmospheric variables are tightly coupled (Eq.\ref{principle}); however, in our modeling, we decoupled these variables, overlooking the multivariate correlations and training separate models for each atmospheric variable. This was primarily driven by performance considerations and was consistent with previous work \cite{wu2023interpretable}. Accounting for multivariate correlations and jointly forecasting multiple atmospheric variables might introduce learning challenges, potentially leading to degraded prediction performance. Since the number of atmospheric variables to be predicted is usually limited, independently forecasting each variable is a cost-effective approach relative to performance improvement, especially when using the lightweight STELLA model. Nonetheless, incorporating other meteorological variables as covariates may enhance prediction performance, which we leave as future work.

\paragraph{Extreme Weather Forecasting.} When forecasting a target time series with violent fluctuations, the model may provide over-smooth predictions, leading to an inability to accurately forecast extreme weather events in practical applications. This is a common issue for DL models due to the use of MSE/MAE loss. A possible explanation is that MSE loss compresses the feature representations into a constrained space, limiting the model's ability to capture high-entropy features, especially those with significant variability \cite{zhangimproving}. Incorporating cross-entropy as a classification loss into the loss function may help address this issue \cite{sun2024hierarchical}. Additionally, although we considered the physical principles of atmospheric dynamics, we did not directly incorporate physical constraints into the model's predictions. Doing so may help mitigate the issue of over-smoothing and also improve the interpretability of the predictions. We leave this as future work.

\paragraph{Fitting Capacity of MLP Backbone.}

We recognize that MLPs, due to their shallow and simple architecture, may face limitations in fitting capacity. We believe this constitutes the primary bottleneck limiting STELLA's further performance improvement. The motivation of this paper is to demonstrate that even minimalist architectures, when integrated with spatial-temporal knowledge, can achieve SOTA performance while offering superior efficiency. As shown in Table \ref{tab:4}, introducing STPE into iTransformer leads to a new SOTA on GlobalWind and other datasets, demonstrating that STPE is not only effective on MLPs, but also generalizable to more expressive architectures. We leave the exploration of alternative architectures for future research.

\section{Broader Impact}

This paper proposes STELLA, a lightweight approach for atmospheric time series forecasting. STELLA demonstrates high training efficiency, which helps reduce energy consumption and benefits domains such as agriculture and the economy. However, the deep learning-based approach may lack interpretability in forecasting results, and regional biases in training data may disadvantage certain populations.

\end{document}